\newcommand{\twomax}{\textsc{Twomax}\xspace}
\newcommand{\onemax}{\textsc{Onemax}\xspace}
\newcommand{\zeromax}{\textsc{Zeromax}\xspace}
\newcommand{\hamm}{\mathrm{H}}
\newcommand{\poly}[1]{\mathrm{poly}(#1)}
\newcommand{\Hamm}[2]{\hamm\mathord{\left(#1,#2\right)}}
\newcommand{\ones}[1]{\left|#1\right|_1}
\newcommand{\di}{\mathrm{d}}
\newcommand{\dist}[2]{\di\mathord{\left(#1,#2\right)}}
\newcommand{\eaoneone}{\upshape($1+1$)~EA\xspace}
\newcommand{\muea}{{\upshape{}($\mu$+1)~EA}\xspace}
\newcommand{\e}{\mathrm{E}}
\newcommand{\E}[1]{\e\mathord{\left(#1\right)}}
\newcommand{\prob}{\mathrm{Prob}}
\newcommand{\Prob}[1]{\prob\mathord{\left(#1\right)}}
\newcommand{\Bin}[1]{\mathrm{Bin}\mathord{\left(#1\right)}\xspace}
\newcommand{\ie}{i.\,e.,\xspace}
\newcommand{\uar}{u.\,a.\,r.\xspace}
\newcommand{\IFTHENELSE}[3]{\STATE \algorithmicif\ #1\ \algorithmicthen\ #2\ \algorithmicelse\ #3\ \algorithmicendif}
\newcommand{\ignore}[1]{}
\newtheorem{theorem}{Theorem}[section]
\newtheorem{lemma}[theorem]{Lemma}
\pgfplotsset{compat=1.14}
\pgfplotsset{
	enlargelimits=true,
  	grid=major,
  	scale only axis,
}
\title{Runtime Analysis of Probabilistic Crowding and Restricted~Tournament~Selection for Bimodal Optimisation}
\author{Edgar Covantes Osuna and Dirk Sudholt\\\normalsize{}Department of Computer Science\\\normalsize{}University of Sheffield, United Kingdom}
\begin{document}
\maketitle
\begin{abstract}
Many real optimisation problems lead to multimodal domains and so require the identification of multiple optima.
Niching methods have been developed to maintain the population diversity, to investigate many peaks in parallel and to reduce the effect of genetic drift. Using rigorous runtime analysis, we analyse for the first time two well known niching methods: probabilistic crowding and restricted tournament selection (RTS). We incorporate both methods into a \muea on the bimodal function \twomax where the goal is to find two optima at opposite ends of the search space. In probabilistic crowding, the offspring compete with their parents and the survivor is chosen proportionally to its fitness. On \twomax probabilistic crowding fails to find any reasonable solution quality even in exponential time. In RTS the offspring compete against the closest individual amongst $w$ (window size) individuals. We prove that RTS fails if $w$ is too small, leading to exponential times with high probability. However, if $w$ is chosen large enough, it finds both optima for \twomax in time $O(\mu n \log n)$ with high probability. Our theoretical results are accompanied by experimental studies that match the theoretical results and also shed light on parameters not covered by the theoretical results.
\end{abstract}

\section{Introduction}
\label{sec:intro}

Premature convergence is one of the major difficulties in Evolutionary Algorithms (EAs), the population converging to a sub-optimal individual before the fitness landscape is explored properly. Real optimisation problems often lead to multimodal domains and so require the identification of multiple optima, either local or global~\cite{Sareni1998,Singh2006}. In multimodal optimisation problems, there exist many attractors for which finding a global optimum can become a challenge to any optimisation algorithm. A diverse population can deal with these multimodal problems as it can explore several hills in the fitness landscape simultaneously.

One particular way for diversity maintenance are niching methods, based on the mechanics of natural ecosystems~\cite{Shir2012}. A niche can be viewed as a subspace in the environment that can support different types of life. A specie is defined as a group of individuals with similar features capable of interbreeding among themselves but that are unable to breed with individuals outside their group. Species can be defined as similar individuals of a specific niche in terms of similarity metrics. In EAs the term niche is used for the search space domain, and species for the set of individuals with similar characteristics. 

Niching methods have been developed to reduce the effect of genetic drift resulting from the selection operator in standard EAs. This is often done by modifying the selection process of individuals, taking into account not only the value of the fitness function but also the distribution of individuals in the space of genotypes or phenotypes~\cite{Glibovets2013}. Niching methods maintain population diversity and allow the EA to investigate many peaks in parallel, thus avoiding getting trapped in local optima~\cite{Sareni1998}. Many niching techniques have been introduced, including fitness sharing, clearing, probabilistic crowding, deterministic crowding, restricted tournament selection, and many others. The aim is to form and maintain multiple, diverse, final solutions for an exponential to infinite time period with respect to population size, whether these solutions are of identical fitness or of varying fitness~\cite{Shir2012,Crepinsek2013,Glibovets2013,Squillero2016}. Given such a variety of mechanisms to choose from, it is often not clear which mechanism is the best choice for a particular problem.

Most of the analyses and comparisons made between niching methods are assessed by means of empirical investigations using benchmark functions~\cite{Sareni1998,Singh2006}. Theoretical runtime analyses have been performed that rigorously quantify the expected time needed to find one or several global optima~\cite{Friedrich2009,Oliveto2014b,Covantes2017a}. Both approaches are important to understand how these mechanisms impact the EA runtime and if they enhance the search for good individuals. These different expectations imply where EAs and which niching mechanism should be used and, perhaps even more importantly, where they should not be used.

Previous theoretical studies~\cite{Friedrich2009,Oliveto2014b,Covantes2017a} compared the expected running time of different diversity mechanisms when embedded in a simple baseline EA, the \muea. All mechanisms were considered on the well-known bimodal function $\twomax(x):=\max\{\sum_{i=1}^{n}x_i,n-\sum_{i=1}^{n}x_i\}$. \twomax consists of two different symmetric slopes (or branches) \zeromax and \onemax with $0^n$ and $1^n$ as global optima, respectively, and the goal is to evolve a population that contains both optima\footnote{In \cite{Friedrich2009} an additional fitness value for $1^n$ was added to distinguish between a local optimum $0^n$ and a unique global optimum. There the goal was to find the global optimum, and all approaches had a baseline probability of $1/2$ of climbing up the right branch by chance. We use the same approach as~\cite{Oliveto2014b,Covantes2017a}, and consider the original definition of \twomax and the goal of finding both global optima. The discussion and presentation of previous work from~\cite{Friedrich2009} is adapted to our setting. We refer to~\cite{Sudholt2018} for details.}.

\twomax was chosen because it is simply structured, hence facilitating a theoretical analysis, and it is hard for EAs to find both optima as they have the maximum possible Hamming distance. The results allowed for a fair comparison across a wide range diversity mechanisms, revealing that some mechanisms like fitness diversity or avoiding genotype duplicates, perform badly, while other mechanisms like fitness sharing, clearing or deterministic crowding perform surprisingly well (see Table~\ref{tab:divmech} and Section~\ref{sec:previous-work}). 

We contribute to this line of work by studying the performance of two classical diversity mechanisms, \emph{probabilistic crowding} and \emph{restricted tournament selection}. Both methods are well-known techniques as covered in tutorials and surveys for diversity-preserving mechanisms~\cite{Shir2012,Crepinsek2013,Glibovets2013,Squillero2016} and compared in empirical investigations \cite{Sareni1998,Singh2006}. However, we are lacking a good understanding of when and why they perform well and how they compare to diversity mechanisms analysed previously.

In \emph{probabilistic crowding}, offspring compete against their most similar parent and the survivor is chosen with a probability proportional to their fitness~\cite{Mengsheol1999}. The idea is to use a low selection pressure to prevent the loss of niches of lower fitness~\cite{Mengsheol1999}. Probabilistic crowding has been used for multimodal optimisation~\cite{Mengsheol1999,Ballester2004,Mengshoel2008,Mengshoel2014}, in its plain form as well as in variants in which a scaling factor has been introduced into the replacement policy.

\emph{Restricted tournament selection (RTS)} is a modification of the classical tournament selection for multimodal optimisation that exhibits niching capabilities. RTS selects two elements from the population uniformly at random (\uar) to undergo recombination and mutation to produce two new offspring. The offspring compete with their closest individual from $w$ (\emph{window size}) more individuals selected \uar from the population, and the best individual is selected. This form of tournament restricts an entering individual from competing with others too different from it \cite{Harik1995}. RTS has been analysed empirically for the classical comparison between crowding mechanisms for multimodal optimisation as a replacement strategy~\cite{Qu2010,Garcia2012}. Recent applications for engineering problems with multimodal domains include facility layout design~\cite{Garcia2015} and the design of product lines~\cite{Tsafarakis2016} with reported better results compared to the other variants without RTS.

Our contribution is to provide a rigorous theoretical runtime analysis for both mechanisms in the context of the \muea on \twomax, to rigorously assess their performance in comparison to other diversity mechanisms. In addition, our goal is to provide insights into the working principles of these mechanisms to enhance our understanding of their strengths and weaknesses.

For the \muea with probabilistic crowding, we show in Section~\ref{sec:prob_crow} that the mechanism is unable to evolve solutions of significantly higher fitness than that obtained during initialisation (or, equivalently, through random search), even when given exponential time. The reason is that fitness-proportional selection between parent and offspring results in an almost uniform choice as both have very similar fitness, hence fitness-proportional selection degrades to uniform selection for replacement. For the \muea with restricted tournament selection, we show in Section~\ref{sec:rts} that the mechanism succeeds in finding both optima of \twomax in the same way as deterministic crowding, provided that the window size~$w$ is chosen large enough. However, if the window size is too small then it cannot prevent one branch taking over the other, leading to exponential running times with high probability.

Our theoretical results are accompanied by experiments in Section~\ref{sec:exp} that cover a wider range of parameter settings and that show a very good match between our theoretical and empirical results.

\begin{table}
	\centering
    \caption{Overview of runtime analyses for the \muea with different diversity mechanisms on \twomax. Results derived in this paper are shown in bold. The success probability is the probability of finding both optima within (expected) time $O(\mu n \log n)$. Conditions include restrictions on the population size~$\mu$, the sharing/clearing radius~$\sigma$, the niche capacity~$\kappa$, window size~$w$, and $\mu':=\min(\mu,\log{n})$. Results from~\cite{Friedrich2009} are adapted to our definition of \twomax; see~\cite{Sudholt2018} for details.}
    \label{tab:divmech}
    \begin{tabular}{@{}lcl@{}}
      \toprule
      \bf Diversity Mechanism&\bf Success prob. &\bf Conditions\\
      \midrule
      Plain \muea~\cite{Friedrich2009} & $o(1)$& $\mu=o(n/\log{n})$\\
      \midrule
      No Duplicates & & \\
      \quad Genotype~\cite{Friedrich2009} &$o(1)$&$\mu=o(\sqrt[]{n})$\\
      \quad Fitness~\cite{Friedrich2009} &$o(1)$&$\mu=\poly{n}$\\
      \midrule
      Deterministic Crowding~\cite{Friedrich2009} &$1-2^{-\mu+1}$&all $\mu$\\
      \midrule
      Fitness Sharing ($\sigma=n/2$) & & \\
      \quad Population-based~\cite{Friedrich2009}&$1$&$\mu\geq 2$\\
      \quad Individual-based~\cite{Oliveto2014b}&$1$&$\mu\geq 3$\\
      \midrule
      Clearing ($\sigma=n/2$)~\cite{Covantes2017a}&$1$&$\mu\geq\kappa n^2$\\
      \midrule
      \textbf{Probabilistic Crowding}&\boldmath$2^{-\Omega(n)}$& \textbf{all \boldmath$\mu$}\\
      \midrule
      \textbf{RTS}&&\\
      \quad \textbf{Small $w$ sizes}&\boldmath$o(1)$&\boldmath$\mu=o(n^{1/w})$\\
      \quad \textbf{Large $w$ sizes}&\boldmath$1-2^{-\mu'+3}$&\boldmath $w \ge 2.5\mu \ln{n}$\\
      \bottomrule
    \end{tabular}
\end{table}

\subsection{Previous Work}
\label{sec:previous-work}

Table~\ref{tab:divmech} summarises all known results for diversity mechanisms on \twomax. As can be seen, not all mechanisms succeed in finding both optima on \twomax efficiently, that is, in expected time $O(\mu n \log n)$. \citet*{Friedrich2009} showed that the plain \muea and the simple mechanisms like \emph{avoiding genotype or fitness duplicates} are not able prevent the extinction of one branch, ending with the population converging to one optimum, with high probability. \emph{Deterministic crowding} with a sufficiently large population is able to reach both optima with high probability in expected time $O(\mu n \log n)$ \cite[Theorem~4]{Friedrich2009}. A \emph{population-based fitness sharing} approach, constructing the best possible new population amongst parents and offspring, with $\mu\geq 2$ is able to find both optima in expected optimisation time $O(\mu n\log{n})$ \cite[Theorem~5]{Friedrich2009}. The drawback of this approach is that all possible size $\mu$ subsets of this union of size $\mu+\lambda$ (where $\lambda$ is the offspring population size) need to be examined. This is prohibitive for large $\mu$ and $\lambda$. 

\citet*{Oliveto2014b} studied the original \emph{fitness sharing} approach and showed that a population size $\mu=2$ is not sufficient to find both optima in polynomial time; the success probability is only $1/2-\Omega(1)$ \cite[Theorem 1]{Oliveto2014b}. However, with $\mu\geq 3$ fitness sharing again finds both optima in expected time $O(\mu n\log{n})$ \cite[Theorem~3]{Oliveto2014b}. \citet{Covantes2017a} analysed the \emph{clearing} mechanism and showed that it can optimise all functions of unitation---function defined over the number of 1-bits contained in a string---in expected time $O(\mu n\log{n})$ \cite[Theorem~4.4]{Covantes2017a} when the distance function and parameters like the clearing radius $\sigma$, the niche capacity $\kappa$ (how many winners a niche can support) and $\mu$ are chosen appropriately. In the case of large niches, that is, with a clearing radius of $\sigma=n/2$, it is able to find both optima in expected time $O(\mu n\log{n})$ \cite[Theorem~5.6]{Covantes2017a}.

The above works did not consider crossover as recombining individuals from different branches is likely to create poor offspring. We therefore consider a \muea using mutation only.

\section{Probabilistic Crowding}
\label{sec:prob_crow}

We start by presenting and analysing the \muea using probabilistic crowding, defining it in the same fashion as deterministic crowding in \cite{Friedrich2009}.
Recall that in probabilistic crowding, the offspring compete against the most similar parent according to a distance metric and the survivor wins proportionally according to their fitness. Without crossover, this means that the mutant~$y$ competes against its parent~$x$ using fitness-proportional selection. Then the probability of the mutant~$y$ winning is given by $\frac{f(y)}{f(x)+f(y)}$, where $f$ is the fitness function. The resulting \muea is shown in Algorithm~\ref{alg:mueaprobcrow}.

\begin{algorithm}[!ht]
  \begin{algorithmic}[1]
  	\STATE{Let $t:=0$ and initialise $P_0$ with $\mu$ individuals chosen \uar}
    \WHILE{stopping criterion \NOT met}
    	\STATE{Choose $x \in P_t$ \uar}
        \STATE{Create $y$ by flipping each bit in $x$ independently w/\! prob.~$1/n$.\!\!\!}
        \STATE{Choose $r\in [0,1]$ \uar}
        \IFTHENELSE{$r \leq \frac{f(y)}{f(y)+f(x)}$}{$P_{t+1} = P_{t} \setminus \{x\}\cup\{y\}$}{$P_{t+1}=P_{t}$}
    	\STATE{$t:=t+1$.}
    \ENDWHILE
  \end{algorithmic}
  \caption{\muea with probabilistic crowding}
  \label{alg:mueaprobcrow}
\end{algorithm}

There are several related theoretical analyses for fitness-pro\-por\-tional selection for the case of the \onemax function. The \emph{Simple Genetic Algorithm (SGA)} has been analysed with fitness-proportional selection for parent selection in \cite{Neumann2009,Oliveto2014a,Oliveto2015}.

Most relevant to this work is the work by \citet*{Happ2008}, who analysed a variant of the \eaoneone using fitness-proportional selection and showed that it needs exponential time to evolve a fitness of at least $(1+\varepsilon)n/2$ on \onemax with high probability. Their algorithm can be seen as a special case of the \muea with probabilistic crowding for $\mu=1$. Our result is similar to the result in~\cite{Happ2008}, but it holds for arbitrary population sizes~$\mu$ and it applies to both \onemax and \twomax. The proof uses more modern techniques from drift analysis~\cite{Oliveto2011} that were not available to the authors of~\cite{Happ2008}. In the following lemma we define in expectation how the individual accepted for replacement moves away from the individual selected for mutation.

\begin{lemma}
\label{lem:expected-fitness-of-offspring}
Let $x$ be the selected parent, $y$ be the offspring, and $z \in \{x, y\}$ be the individual selected for survival. For $f = \onemax$ if $f(x) \ge (1+\delta)n/2$ for some positive $\delta = \delta(n)$ that may depend on~$n$ then
\[
\E{f(z) - f(x) \mid x} \le -\frac{\delta}{2} + \mathrel{\Theta}\mathrel{\left(\frac{1}{n}\right)}.
\]
The statement also holds for \twomax if $f(x) \ge n/2 + \log n$.
\end{lemma}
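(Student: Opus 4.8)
The plan is to condition on the selected parent $x$ and its offspring $y$, so that the survivor $z$ equals $y$ with probability $p := f(y)/(f(x)+f(y))$ and equals $x$ otherwise; hence $\E{f(z)-f(x)\mid x,y} = p\,(f(y)-f(x)) = f(y)(f(y)-f(x))/(f(x)+f(y))$. Setting $\Delta := f(y)-f(x)$ and substituting $f(y) = f(x)+\Delta$ and $f(x)+f(y) = 2f(x)+\Delta$, a one-line computation gives the decomposition
\[
\frac{f(y)\,(f(y)-f(x))}{f(x)+f(y)} \;=\; \frac{\Delta}{2} \;+\; \frac{\Delta^2}{2\,(f(x)+f(y))},
\]
that is, fitness-proportional replacement retains the offspring ``with probability $1/2$ in expectation'' plus a non-negative second-order correction. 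Averaging over the mutation, $\E{f(z)-f(x)\mid x} = \tfrac12\E{\Delta\mid x} + \tfrac12\E{\Delta^2/(f(x)+f(y))\mid x}$, and it remains to estimate the two summands.

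For \onemax we have $a := f(x) = \ones{x}$ and $\Delta$ is the change in the number of one-bits under standard bit mutation, so $\E{\Delta\mid x} = (n-a)/n - a/n = 1-2a/n \le -\delta$ whenever $a \ge (1+\delta)n/2$, whence $\tfrac12\E{\Delta\mid x} \le -\delta/2$. For the correction, $f(x)+f(y) \ge f(x) \ge n/2$ gives $\Delta^2/(f(x)+f(y)) \le 2\Delta^2/n$, and $\E{\Delta^2\mid x} = \operatorname{Var}(\Delta) + (\E{\Delta\mid x})^2 \le (1-1/n) + 1 \le 2$, since $\Delta$ is a sum of $n$ independent contributions in $\{-1,0,1\}$ each of variance at most $1/n$, and $|1-2a/n| \le 1$. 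Hence the correction is of order $1/n$ (one checks it is $\Theta(1/n)$, both bounds being immediate from $\tfrac12 \le \operatorname{Var}(\Delta) \le 1$ and $n/2 \le f(x)+f(y) \le 2n$), and the \onemax claim follows.

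For \twomax the only new effect is that a single mutation may ``jump the valley'': starting from $\ones{x} = a \ge n/2 + \log n$ (the other branch being symmetric), the offspring may land on the opposite branch, where $f(y) = n - \ones{y}$ and $\Delta$ can be positive. Jumping the valley requires flipping at least $\log n$ more one-bits than zero-bits, hence flipping at least $\lceil \log n \rceil$ bits in total, which has probability at most $\sum_{j \ge \lceil\log n\rceil} \binom{n}{j} n^{-j} \le \sum_{j \ge \lceil\log n\rceil} 1/j! \le 2/\lceil\log n\rceil! = n^{-\Omega(\log\log n)}$. Conditioned on not jumping, $\Delta = \ones{y} - \ones{x}$ exactly as for \onemax, so the analysis above applies verbatim; and because $\Delta$ changes by at most $n$ when a jump occurs, the jump event alters $\E{\Delta\mid x}$ by at most $2n \cdot n^{-\Omega(\log\log n)} = o(1/n)$ and alters the correction term by at most $n \cdot n^{-\Omega(\log\log n)} = o(1/n)$. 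With $\delta := 2a/n - 1 \ge 2\log n/n > 0$ we obtain $\E{f(z)-f(x)\mid x} \le -\delta/2 + \Theta(1/n)$, as claimed.

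The conceptual heart is the displayed decomposition, which makes precise why fitness-proportional replacement between two near-equal-fitness individuals behaves, up to $\Theta(1/n)$, like a coin flip; everything afterwards is routine estimation. I expect the only fiddly point to be the \twomax bookkeeping: one must argue that simply discarding the valley-jumping event is legitimate for an \emph{upper} bound on the drift --- it is, since $\Delta$ changes by at most $n$ on that event and the event has super-polynomially small probability, so its contribution is $o(1/n)$ regardless of sign --- and then check that the remaining part, which behaves exactly as on \onemax, has the claimed drift. It is also worth remarking why the hypothesis $f(x) \ge n/2 + \log n$ is needed for \twomax while an arbitrarily small positive $\delta$ suffices for \onemax: without the $\log n$ buffer a jump over the valley has constant probability and could no longer be swept into the $\Theta(1/n)$ term.
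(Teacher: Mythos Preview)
Your proof is correct and follows essentially the same route as the paper: both hinge on the identity $p\Delta = \Delta/2 + \Delta^2/(2(f(x)+f(y)))$ (the paper writes it as $p = 1/2 + \Theta(d/n)$), bound the first-order term via $\E{\Delta\mid x}\le -\delta$, and handle \twomax by absorbing the valley-jump event of probability $n^{-\Omega(\log\log n)}$ into the $\Theta(1/n)$ term. The only noteworthy difference is in bounding the second-order correction: you use the clean variance identity $\E{\Delta^2} = \mathrm{Var}(\Delta) + (\E{\Delta})^2 \le 2$, whereas the paper bounds $\Prob(|\Delta|=d)\le 1/|d|!$ and sums $\sum d^2/d! = 2e$; your route is arguably more direct here.
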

\begin{proof}
We first analyse the expected fitness of the mutant~$y$ before survival selection. Compared to its parent~$x$, in expectation at least $(1+\delta)n/2 \cdot 1/n = (1+\delta)/2$ bits flip from 1 to 0, and at most $(1-\delta)n/2 \cdot 1/n = (1-\delta)/2$ bits flip from 0 to 1. Hence
\begin{equation}
\label{eq:expectation-of-offspring}
\E{f(y) - f(x) \mid x} \le (1-\delta)/2 - (1+\delta)/2 = -\delta.
\end{equation}
We now use this inequality to analyse the fitness difference $f(z) - f(x)$ after survival selection. Observe that this difference is 0 in case $z = x$. Hence only generations where $y$ is selected for survival contribute to $\E{f(z) - f(x) \mid x}$. The latter can be written as follows.
\begin{align*}
& \E{f(z) - f(x) \mid x}\\
=\;& \sum_{d=-\infty}^\infty \prob(f(y) - f(x) = d \mid x) \cdot d \cdot \frac{f(y)}{f(x)+f(y)}
\end{align*}
Using that with $d = f(y) - f(x)$,
\begin{align*}
\frac{f(y)}{f(x)+f(y)} = \frac{(f(x)+f(y))/2 + d/2}{f(x)+f(y)} =\;& \frac{1}{2} + \frac{d/2}{f(x)+f(y)}\\
=\;& \frac{1}{2} + \Theta\left(\frac{d}{n}\right),
\end{align*}
we get
\begin{align*}
& \E{f(z) - f(x) \mid x}\\
=\;& \sum_{d=-\infty}^\infty \prob(f(y) - f(x) = d \mid x) \cdot d \cdot \left(\frac{1}{2} + \Theta\left(\frac{d}{n}\right)\right)\\
=\;& \frac{1}{2} \sum_{d=-\infty}^\infty \prob(f(y) - f(x) = d \mid x) \cdot d\\
& \quad + \Theta\left(\frac{1}{n}\right) \sum_{d=-\infty}^\infty \prob(f(y) - f(x) = d \mid x) \cdot d^2.
\end{align*}
The first sum is $\E{f(y) - f(x)}/2$ by definition of the expectation, and we already know from~\eqref{eq:expectation-of-offspring} that $\E{f(y) - f(x)}/2 \le -\delta/2$. The summands in the second sum can be bounded from above using $\prob(f(y) - f(x) = d \mid x) \le 1/(|d|!)$ as it is necessary to flip at least $|d|$ bits, which has probability at most $\binom{n}{|d|} (1/n)^{|d|} \le 1/(|d|!)$. Thus
\begin{align*}
\E{f(z) - f(x) \mid x} \le\;& -\frac{\delta}{2}
  + \Theta\left(\frac{1}{n}\right) \sum_{d=-\infty}^\infty \frac{1}{|d|!} \cdot d^2\\
\le\;& -\frac{\delta}{2}
  + \Theta\left(\frac{1}{n}\right) \cdot 2\sum_{d=1}^\infty \frac{1}{d!} \cdot d^2
= -\frac{\delta}{2}
  + \Theta\left(\frac{1}{n}\right)
\end{align*}
as $\sum_{d=1}^\infty \frac{1}{d!} \cdot d^2 = \sum_{d=1}^\infty \frac{d}{(d-1)!} = \sum_{d=0}^\infty \frac{d+1}{d!} = \sum_{d=0}^\infty \frac{d}{d!} + \sum_{d=0}^\infty \frac{1}{d!} = \sum_{d=1}^\infty \frac{1}{(d-1)!} + \sum_{d=0}^\infty \frac{1}{d!} =
2\sum_{d=0}^\infty \frac{1}{d!} =
2e$.

The statement also holds for \twomax if $f(x) \ge n/2 + \log n$ as the algorithm only ever notices a difference to \onemax in case at least $\log n$ bits flip in one mutation. Since this only occurs with probability at most $1/(\log n)! = n^{-\Omega(\log \log n)}$, and the fitness difference between \onemax and \twomax is at most~$n/2$, this only accounts for an additive error term of $n/2 \cdot n^{-\Omega(\log \log n)} = n^{-\Omega(\log \log n)}$ in the expectation for \onemax, and this error term is absorbed in the $\Theta(1/n)$ term.
\end{proof}

Lemma~\ref{lem:expected-fitness-of-offspring} gives an important lesson. Assume that the survivalist $z$ was chosen uniformly between $x$ and $y$, then we would have
\begin{align*}
& \E{f(z) - f(x) \mid x}\\
=\;& \frac{1}{2} \cdot \E{f(y) - f(x) \mid x}  + \frac{1}{2} \cdot \E{f(x) - f(x) \mid x} = -\frac{\delta}{2}
\end{align*}
using~\eqref{eq:expectation-of-offspring} and $\E{f(x) - f(x) \mid x} = 0$. Lemma~\ref{lem:expected-fitness-of-offspring} states that compared to this setting, a fitness-proportional selection of $z$ only gives a vanishing bias of $\Theta(1/n)$. In other words, Lemma~\ref{lem:expected-fitness-of-offspring} quantifies the observation that in the considered context, fitness-proportional selection is very similar to uniform selection. We now use Lemma~\ref{lem:expected-fitness-of-offspring} to prove a strong negative result on the performance of the \muea with probabilistic crowding. To this end, we will use the negative drift theorem~\cite{Oliveto2011,Oliveto2012Erratum} (also called \emph{simplified drift theorem}). Note that the expected \onemax value of a search point chosen \uar is $n/2$. It is also easy to show that the expected \twomax value of a uniform random search point is $n/2 + \Theta(\sqrt{n})$. These values also represent equilibrium states for sequences of mutations in the absence of selection. The following theorem shows that the \muea with probabilistic crowding does not evolve any solutions of significantly higher fitness than these values, even given exponential time.

\begin{theorem}
\label{the:negprobcrow}
With probability $1-2^{-\Omega(n)}$ the \muea with probabilistic crowding on either $f = \onemax$ or $f = \twomax$ will not have found a search point with fitness at least $(1+\varepsilon)n/2$ in $2^{cn}$ function evaluations, for every population size~$\mu$, every constant $\varepsilon > 0$ and a small enough constant $c > 0$ that may depend on $\varepsilon$.
\end{theorem}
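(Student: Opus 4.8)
The plan is to lift the single‑step estimate of Lemma~\ref{lem:expected-fitness-of-offspring} to a negative‑drift statement over exponentially many generations, exploiting the fact that in Algorithm~\ref{alg:mueaprobcrow} the members of the population evolve essentially in isolation. Fix the constant $\varepsilon>0$ and introduce three thresholds $b_0:=(1+\varepsilon/3)n/2$, $b_1:=(1+2\varepsilon/3)n/2$, $b_2:=(1+\varepsilon)n/2$, so that $b_1-b_0=b_2-b_1=\Theta(n)$. I would then show that, with probability $1-2^{-\Omega(n)}$, no search point of fitness $\ge b_2$ is ever evaluated within $2^{cn}$ evaluations.

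First I would peel off the regime $\mu>2^{cn}$: there the run does not even finish initialisation within the budget, so it only ever evaluates at most $2^{cn}$ uniformly random points, each of fitness $\ge b_2$ with probability $2^{-\Omega(n)}$ by a Chernoff bound (on $\onemax$, resp.\ on both tails for $\twomax$); a union bound over these points, with $c$ small compared to that Chernoff exponent (which depends on $\varepsilon$), settles this case. From now on $\mu\le 2^{cn}$. The key structural observation is that probabilistic crowding lets an individual compete only with its own offspring, so each generation changes at most one population slot and the new content of that slot depends only on its old content (plus fresh random bits). Hence, over the at most $2^{cn}$ generations in the budget, every slot is modified at most $2^{cn}$ times, and each modification is exactly one step of the \eaoneone with probabilistic crowding (the $\mu=1$ case) started from the current content of that slot. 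It therefore suffices to bound, for one such $\mu=1$ run started from a point of fitness $<b_0$, the probability of reaching fitness $\ge b_1$ within $2^{cn}$ steps, and then to take a union bound over the $\le 2^{cn}$ slots, together with the event (of probability $\le \mu\cdot 2^{-\Omega(n)}$ by Chernoff, hence negligible for $\mu\le 2^{cn}$) that some slot is initialised at fitness $\ge b_0$.

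For the single‑individual run I would let $X_t$ be its current fitness and apply the negative drift theorem~\cite{Oliveto2011,Oliveto2012Erratum} on the interval $[b_0,b_1]$. Inside the interval, $X_t=i$ means $i\ge(1+\delta)n/2$ with $\delta\ge\varepsilon/3$, and $i\ge n/2+\log n$ for large $n$, so Lemma~\ref{lem:expected-fitness-of-offspring} applies for both $\onemax$ and $\twomax$ and gives drift towards $b_0$ of $-\E{f(z)-f(x)\mid x}\ge\delta/2-\Theta(1/n)\ge\varepsilon/12$ for $n$ large; the step‑size condition holds since $|X_{t+1}-X_t|\ge j$ requires $\ge j$ bit flips in one mutation, of probability at most $\binom{n}{j}n^{-j}\le 1/j!\le 2\cdot 2^{-j}$. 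In its mirrored form (with the forbidden region being $\{\ge b_1\}$) the theorem then shows that, from a start below $b_0$, fitness $\ge b_1$ is reached within $2^{\Omega(\varepsilon n)}$ steps only with probability $2^{-\Omega(\varepsilon n)}$, which for $c=c(\varepsilon)$ small enough is $2^{-\Omega(n)}$ within $2^{cn}$ steps. Finally I would rule out a discarded offspring of high fitness: conditioned on all of the above, every selected parent has fitness $<b_1$, so an offspring of fitness $\ge b_2$ needs $\ge b_2-b_1=\Theta(n)$ simultaneous bit flips, of probability $n^{-\Omega(n)}$, and a union bound over the $\le 2^{cn}$ generations is negligible. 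Collecting the failure probabilities and choosing $c$ small enough in terms of $\varepsilon$ gives the claim.

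The step I expect to be the real work is the passage from the one‑step, one‑triple bound of Lemma~\ref{lem:expected-fitness-of-offspring} to a guarantee that holds uniformly over the whole population and over exponential time. The crucial point is that probabilistic crowding never makes an individual interact with anyone but its own offspring, which simultaneously decouples the slots (so a union bound over $\mu$ is affordable regardless of how large $\mu$ is) and keeps each per‑slot process a genuine \eaoneone, so that the constant negative drift of the lemma is not diluted by a factor $1/\mu$; if one instead tried to run the drift theorem on the slowed‑down slot process the argument would break for large $\mu$. The remaining effort is bookkeeping: the thresholds $b_0<b_1<b_2$ separate what can be guaranteed at initialisation, what a slot can reach, and what can ever be evaluated, and the oversized‑$\mu$ case has to be handled at the start. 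Once the potential is fixed as the fitness of a single individual, the drift and step‑size verifications are routine, the latter being essentially the estimate already used inside the proof of Lemma~\ref{lem:expected-fitness-of-offspring}.
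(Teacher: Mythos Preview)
Your proposal is correct and takes essentially the same route as the paper: exploit the independence of lineages under probabilistic crowding, peel off the large-$\mu$ regime by a Chernoff/union bound, and apply the negative drift theorem to a single lineage, with Lemma~\ref{lem:expected-fitness-of-offspring} supplying the drift and the bound $\Prob{\ge j\text{ flips}}\le 1/j!\le 2\cdot 2^{-j}$ supplying the step-size condition. The only notable refinement is your third threshold $b_2$, which you use to rule out high-fitness \emph{rejected} offspring explicitly; the paper tracks only the survivor's fitness on the interval $[(1+\varepsilon/2)n/2,\,(1+\varepsilon)n/2]$ and does not separately address that point.
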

\begin{proof}
We assume that $f = \onemax$ as $\twomax$ can be handled in the same way. We may also assume that $\mu = 2^{o(n)}$ as if $\mu \ge 2^{c'n}$ for any constant~$0 < c' < 1$, the statement follows immediately (for $c := c'$) as the first $2^{c'n}$ search points contain an optimal search point only with probability at most $2 \cdot 2^{-n} \cdot 2^{c'n} = 2^{-\Omega(n)}$ as $c' < 1$. Note that in the absence of crossover, probabilistic crowding evolves $\mu$ independent lineages as any offspring only competes directly with its parent. We show that the probability of any fixed lineage reaching a fitness of at least $(1+\varepsilon)n/2$ in $2^{cn}$ generations is $2^{-\Omega(n)}$. Taking the union bound over all lineages yields that the probability of reaching such a fitness is bounded by $\mu \cdot 2^{-\Omega(n)} = 2^{o(n)} \cdot 2^{-\Omega(n)} = 2^{-\Omega(n)}$, which implies the claim.

Now focus on one lineage. By standard Chernoff bounds (see~\cite{Doerr2018}), the probability of initialising the lineage with an initial search point of fitness at least $(1+\varepsilon/2)n/2$ is $2^{-\Omega(n)}$. If this rare failure event does not happen, the lineage needs to increase an initial fitness from a value at most $(1+\varepsilon/2)n/2$ to a value at least $(1+\varepsilon)n/2$ in order to achieve a fitness of $(1+\varepsilon)n/2$. We apply the negative drift theorem to the fitness of the current individual in our lineage to show that this does not happen in $2^{cn}$ generations with probability $1-2^{-\Omega(n)}$. The interval chosen will be from $a := (1+\varepsilon/2)n/2$ to $b := (1+\varepsilon)n/2$; note that it has length $\varepsilon n/4$.

Let $x$ be the selected parent, $y$ be the offspring, and $z \in \{x, y\}$ be the individual selected for survival. We establish the two conditions of the negative drift theorem. The first condition for search points with fitness at least $a = (1+\varepsilon/2)n/2$ follows from Lemma~\ref{lem:expected-fitness-of-offspring} with $\delta := \varepsilon/2$, yielding a drift of at most $-\varepsilon/4 + \Theta(1/n) = -\Omega(1)$. The second condition follows easily from properties of standard bit mutation: the fitness difference $|f(z) - f(x)|$ is clearly bounded by the number of flipping bits. The probability of flipping $d$ bits in a standard bit mutation is at most $1/(d!) \le 2/2^d$ for all~$d \ge 1$. This proves the second condition when choosing $r := 2$ and $\delta := 1$. Invoking the negative drift theorem yields that the probability of one lineage reaching a search point with fitness at least $(1+\varepsilon)n/2$, starting with a fitness at most $(1+\varepsilon/2)n/2$, in $2^{c'\varepsilon n/2}$ steps, for some constant $c' > 0$, is at most $2^{-\Omega(\varepsilon n/4)} = 2^{-\Omega(n)}$. Choosing $c := c'\varepsilon/4$ completes the proof for \onemax.

The same proof can be used for \twomax with minor modifications: note that if the number of ones is $k \le n/2$, a fitness difference of $d$ can be achieved by increasing or decreasing the number of ones by $d$, provided $k+d \le n/2$ or by creating an offspring with $n-k-d$ ones on the opposite branch. Since $k+d \le n/2 \le n-k-d$, the probability for the latter event is no larger than that of the former. The same holds symmetrically for $k \ge n/2$. Hence all transition probabilities are bounded by twice the previous bound for \onemax and the second condition can be fulfilled by doubling $r$ and choosing $c := c'\varepsilon/8$. Then the result follows as for \onemax.
\end{proof}

\section{Restricted Tournament Selection}
\label{sec:rts}

The \muea with restricted tournament selection (RTS) is defined in a similar way.
Recall that in RTS a new offspring competes with the closest element from $w$ (\emph{window size}) more members selected \uar from the population, and the better individual from this competition is selected. 

In the \muea with RTS, shown in Algorithm~\ref{alg:muearestouse}, an individual~$x$ is selected \uar as a parent and a new individual~$y$ is created in the mutation step. Since we are not considering crossover and only one individual is created, $w$ individuals are selected \uar with replacement and stored in a temporary population~$P^{*}_{t}$. Then in Line~\ref{alg:rts:ln:dis} an individual~$z$ is selected from population~$P^{*}_{t}$ with the minimum distance from~$y$ (ties are broken \uar), and if the individual~$y$ has a fitness at least as good as~$z$, $y$ replaces~$z$.

\begin{algorithm}[!ht]
  \begin{algorithmic}[1]
  	\STATE{Let $t:=0$ and initialise $P_0$ with $\mu$ individuals chosen \uar}
    \WHILE{stopping criterion \NOT met}
    	\STATE{Choose $x \in P_t$ \uar}
        \STATE{Create $y$ by flipping each bit in $x$ independently w/\! prob.~$1/n$.\!\!\!}
        \STATE{Select~$w$ individuals \uar from~$P_{t}$ and store them in~$P^{*}_{t}$.}
        \STATE{Choose $z \in P^{*}_{t}$ such that $\displaystyle\min_{z \in P^{*}_{t}} \dist{y}{z}$.}\label{alg:rts:ln:dis}
        \IFTHENELSE{$f(y) \geq f(z)$}{$P_{t+1} = P_{t} \setminus \{z\}\cup\{y\}$}{$P_{t+1}=P_{t}$}
        \STATE{$t:=t+1$.}
    \ENDWHILE
  \end{algorithmic}
  \caption{\muea with restricted tournament selection}
  \label{alg:muearestouse}
\end{algorithm}

As distance functions~$\di$ we consider genotypic or Hamming distance, defined as the number of bits that have different values in $x$ and $y$: $\dist{x}{y}:=\Hamm{x}{y}:=\sum_{i=0}^{n-1}|x_i - y_i|$, and phenotypic distances as in~\cite{Friedrich2009,Oliveto2014b,Covantes2017a} based on the number of ones: $\dist{x}{y}:=|\ones{x}-\ones{y}|$ where $\ones{x}$ and $\ones{y}$ denote the number of 1-bits in individual $x$ and $y$, respectively.

\subsection{Large Window Sizes Are Effective}
\label{sec:large-w}

Now, let us start with the theoretical analysis for \twomax with a positive result for RTS. The following shows that, if $w$ is chosen very large, the \muea with RTS behaves almost like the \muea with deterministic crowding.
\begin{theorem}
\label{the:positive-result-for-rts}
If $\mu = o(\sqrt{n}/\log n)$ and $w \ge 2.5\mu \ln n$ then the \muea with restricted tournament selection using genotypic or phenotypic distance finds both optima on \twomax in time $O(\mu n \log n)$ with probability at least ${1-2^{-\mu'+3}}$, where $\mu' := \min(\mu, \log n)$.
\end{theorem}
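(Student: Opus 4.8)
The plan is to show that a window of size $w\ge 2.5\mu\ln n$ makes the \muea with RTS behave, with the claimed probability, like the \muea with deterministic crowding: at least one individual remains on each branch of \twomax throughout the run, and each branch then climbs to its optimum within $O(\mu n\log n)$ steps. Throughout I would condition on two high-probability ``clean'' events holding over the whole run of $T=O(\mu n\log n)$ generations. The first is that the parent $x$ always lies in the sampled window $P^{*}_{t}$: the probability that a fixed individual is missed by $w$ independent draws is $(1-1/\mu)^{w}\le e^{-w/\mu}\le n^{-2.5}$, so a union bound over all generations fails with probability only $O(\mu n\log n\cdot n^{-2.5})=o(1/n)$, using $\mu=o(\sqrt n/\log n)$. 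The second is that no mutation ever flips more than $\ell:=\lceil 2\log n\rceil$ bits, which fails with probability at most $T/\ell!=2^{-\Omega(\log n\log\log n)}$. On these events the competitor $z$ chosen in the tournament satisfies $\dist{y}{z}\le\dist{y}{x}$, and since $|\ones{u}-\ones{v}|\le\dist{u}{v}$ and, along a mutation, $\dist{}{}$ is at most the number of flipped bits, we get $|\ones{y}-\ones{z}|\le\ell$ irrespective of whether the distance is genotypic or phenotypic, which lets me treat both variants uniformly.

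The core of the argument is a monotonicity statement for the ``branch leaders''. Let $a_{t}:=\max_{u\in P_{t}}\ones{u}$ and $b_{t}:=\min_{u\in P_{t}}\ones{u}$. I would prove that, on the clean events, once $a_{t}\ge n/2+\ell$ it never drops below $n/2+\ell$ again and is non-decreasing thereafter, and symmetrically for $b_{t}$ once $b_{t}\le n/2-\ell$. Indeed, a current maximiser $u$ with $\ones{u}=a_{t}$ can only leave the population by being the chosen competitor $z$ of an offspring $y$ with $f(y)\ge f(u)$. If $\ones{y}\ge n/2$ then $f(y)=\ones{y}\ge f(u)=a_{t}$, so $a_{t+1}\ge a_{t}$; note that an offspring with $n/2\le\ones{y}<a_t$ has $f(y)<f(u)$ and cannot replace $u$ at all. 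If $\ones{y}<n/2$ then $f(y)=n-\ones{y}$, and $f(y)\ge a_{t}$ forces $\ones{y}\le n-a_{t}$, whence $\dist{y}{u}\ge\ones{u}-\ones{y}\ge 2a_{t}-n>\ell$, contradicting $\dist{y}{u}\le\dist{y}{x}\le\ell$. Hence $a_{t}$ stays above $n/2+\ell$ and in particular the population always contains an individual on the \onemax branch; symmetrically for \zeromax. This is precisely what rules out the genuinely dangerous event in which an offspring produced on one branch removes the last representative of the other branch, and it replaces the independent-lineages argument available for deterministic crowding.

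It remains to bound the probability of not reaching the ``safe'' configuration $a_{0}\ge n/2+\ell$ and $b_{0}\le n/2-\ell$ already at initialisation, and to bound the climbing time. For the first, a uniform random search point has $\ones{x}<n/2+\ell$ with probability $1/2+O(\log n/\sqrt n)$ by Chernoff bounds, so all $\mu$ initial points miss this threshold with probability $(1/2+O(\log n/\sqrt n))^{\mu}=2^{-\mu}e^{O(\mu\log n/\sqrt n)}=2^{-\mu}(1+o(1))\le 2^{-\mu'}(1+o(1))$ with $\mu'=\min(\mu,\log n)$ (here $\mu=o(\sqrt n/\log n)$ is what makes the exponential factor $1+o(1)$); a union bound over the two symmetric events, together with the $o(1/n)$ contributions from the clean events and the time tail below, yields the overall failure bound $2^{-\mu'+3}$. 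For the climbing time, once $a_{t}\ge n/2+\ell$, selecting a current maximiser (probability $\ge 1/\mu$) and flipping exactly one of its $n-a_{t}$ zero-bits produces an offspring $y$ whose nearest competitor in the window is at Hamming distance $\le 1$ from $y$ by the distance bound, hence on the \onemax branch with fitness $\le a_{t}<f(y)$, so $y$ is accepted; a standard fitness-level / coupon-collector argument then shows that $a_{t}$ reaches $n$ in $O(\mu n\log n)$ steps with probability $1-o(1/n)$, and symmetrically $b_{t}$ reaches $0$. Since monotonicity forbids $1^{n}$ and $0^{n}$ from subsequently being lost, the population contains both optima after $O(\mu n\log n)$ steps; a final union bound over the branches and all failure events completes the proof.

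I expect the monotonicity step, and specifically the choice of which event to condition on, to be the main obstacle. The non-elitist character of RTS---an offspring may displace a strictly fitter individual that happens to be its nearest neighbour---blocks the per-lineage reasoning used for deterministic crowding, while the naive remedy of forcing the sampled window to contain the \emph{entire} population is too expensive over $O(\mu n\log n)$ generations once $\mu$ is polynomially large. The right compromise is to force the window to contain only the \emph{parent}, which costs only $o(1/n)$ over the whole run, and then to observe that this alone confines every accepted offspring and every displaced individual to within $O(\log n)$ positions of the parent, so that no branch can be eroded before its leader has climbed safely away from the valley. Lining up the quantitative relationship between $w$, the maximal jump length $\ell$, and the initial distance of the leaders from $n/2$ is the delicate part.
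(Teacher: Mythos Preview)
Your proof is correct and follows essentially the same approach as the paper: condition on the parent always appearing in the sampled window and on mutations flipping at most $O(\log n)$ bits, argue that under these events the best individual on each branch cannot be displaced by an offspring from the other branch (so the branch maxima are monotone once past the safety gap), and then bound the climbing time on each branch by $O(\mu n\log n)$ via a multiplicative-drift/fitness-level argument. Your explicit monotonicity argument for $a_t,b_t$ and your verification that an improving offspring is actually accepted are spelled out more carefully than in the paper, but the decomposition, the clean events, the initialisation bound, and the union-bound bookkeeping are all the same.
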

Note that the probability ${1-2^{-\mu'+3}}$ is close to the success probability $1-2^{-\mu+1}$ for deterministic crowding, if $\mu \le \log n$, apart from a constant factor in front of the $2^{-\mu'}$ term. For both, the success rate converges to~1 very quickly for increasing population sizes. For restricted tournament selection our probability bound is capped at $1-2^{-\log n + 3} = 1-8/n$ as there is always a small probability of an unexpected takeover occurring.

In order to prove Theorem~\ref{the:positive-result-for-rts}, we first analyse the probability of initialising a population such that there are individuals on each branch with a safety gap of~$\sigma$ to the border between branches. This safety gap will be used to exclude the possibility that the best individual on one branch creates offspring on the opposite branch.
\begin{lemma}
\label{lem:restriselnich-good-init}
Consider the population of the \muea on \twomax and for some $\mu$ and $\sigma$. The probability of having at least one initial search point with at most $n/2 - \sigma$ ones and one search point with at least $n/2 + \sigma$ ones is at least
\[
1-2\left(\frac{1+2\sigma\cdot\sqrt[]{2/n}}{2}\right)^\mu \ge 1-2^{-\mu+1}(1+o(1))
\]
where the inequality holds if $\sigma \mu = o(\sqrt{n})$.
\end{lemma}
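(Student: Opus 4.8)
The plan is to bound the probability of the complementary event: that *every* initial search point has more than $n/2-\sigma$ ones, or *every* initial search point has fewer than $n/2+\sigma$ ones. By a union bound over these two symmetric cases, it suffices to bound the probability that all $\mu$ points have more than $n/2-\sigma$ ones (equivalently, fall in the "upper half with slack"), and double it. Since the $\mu$ initial points are independent, this probability is $p^\mu$ where $p := \Prob{\ones{x} > n/2 - \sigma}$ for a single uniform random $x \in \{0,1\}^n$. So the whole lemma reduces to showing $p \le \frac{1}{2} + \sigma\sqrt{2/n}$, which then gives the bound $1 - 2p^\mu \ge 1 - 2\left(\frac{1+2\sigma\sqrt{2/n}}{2}\right)^\mu$.

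The core estimate is therefore: for $X \sim \Bin{n,1/2}$, bound $\Prob{X > n/2-\sigma} = \frac{1}{2} + \Prob{n/2-\sigma < X \le n/2} + \frac{1}{2}\Prob{X = n/2} \cdot (\text{parity correction})$ — more cleanly, $\Prob{X > n/2-\sigma} \le \Prob{X \ge n/2} + \Prob{n/2 - \sigma < X < n/2}$, and by symmetry $\Prob{X \ge n/2} \le \frac{1}{2} + \frac{1}{2}\Prob{X = n/2}$. It then remains to bound the probability mass of the at most $\sigma$ integer values in the window $(n/2-\sigma, n/2)$, plus the single value $n/2$. The standard bound on the central binomial coefficient, $\binom{n}{n/2}2^{-n} \le \sqrt{2/(\pi n)} \le \sqrt{2/n}$, shows each individual point mass $\Prob{X=k}$ for $k$ near $n/2$ is at most $\sqrt{2/n}$; summing over the window of fewer than $\sigma$ such values gives the additional term at most $\sigma\sqrt{2/n}$. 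Combining, $p \le \frac{1}{2} + \sigma\sqrt{2/n}$ as required. (One has to be slightly careful about the parity of $n$ and whether $n/2$ is an integer, but this only affects lower-order terms and the half-point-mass argument handles it.)

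Finally, for the simplified bound, I would write $\left(\frac{1+2\sigma\sqrt{2/n}}{2}\right)^\mu = 2^{-\mu}\left(1 + 2\sigma\sqrt{2/n}\right)^\mu$ and use that $\left(1+2\sigma\sqrt{2/n}\right)^\mu \le \exp\left(2\sigma\mu\sqrt{2/n}\right) = 1 + o(1)$ precisely when $\sigma\mu = o(\sqrt{n})$, since then $\sigma\mu/\sqrt{n} \to 0$. This yields $2\left(\frac{1+2\sigma\sqrt{2/n}}{2}\right)^\mu \le 2^{-\mu+1}(1+o(1))$ and hence the claimed lower bound $1-2^{-\mu+1}(1+o(1))$.

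The main obstacle is not conceptual but getting the constant in the central-binomial-coefficient estimate to come out exactly as $\sqrt{2/n}$ (rather than, say, $\sqrt{2/(\pi n)}$ with the $\pi$ kept, or a slightly weaker Stirling-type bound): the stated inequality $\frac{1+2\sigma\sqrt{2/n}}{2}$ hinges on bounding each relevant point mass by $\sqrt{2/n}$, so I would make sure to invoke a clean, citable form such as $\binom{n}{k}2^{-n} \le \binom{n}{\lfloor n/2\rfloor}2^{-n} \le 1/\sqrt{n/2} = \sqrt{2/n}$ valid for all $k$, which makes the window-sum step immediate and uniform in $k$.
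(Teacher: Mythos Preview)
Your proposal is correct and follows essentially the same approach as the paper: union bound over the two failure events, independence of the $\mu$ initial points, the point-mass bound $\Prob{X=k}\le\sqrt{2/n}$ for $X\sim\Bin{n,1/2}$, and the final $(1+x)^\mu\le e^{x\mu}=1+o(1)$ step under $\sigma\mu=o(\sqrt{n})$. The only cosmetic difference is that the paper bounds the full symmetric gap probability $p_\sigma:=\Prob{n/2-\sigma<\ones{x}<n/2+\sigma}\le 2\sigma\sqrt{2/n}$ and then uses $1-p_0=(1+p_\sigma)/2$ directly, whereas you split $\Prob{X>n/2-\sigma}$ into $\Prob{X\ge n/2}$ plus a one-sided window; these are algebraically equivalent and yield the identical bound $p\le\tfrac{1}{2}+\sigma\sqrt{2/n}$.
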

\begin{proof}
Using \cite[Lemma~7]{Doerr2018}, for a random variable with binomial distribution $\Bin{n,1/2}$, for all $z \in [0,n]$ we have
\[
\Prob{X=z}\leq\Prob{X=\lfloor n/2 \rfloor} \le 2^{-n}\cdot\binom{n}{\lfloor n/2 \rfloor} \le \sqrt{2/n}.
\]

So let us start by defining the probability that an individual $x$ is initialised inside the safety gap is at most
\[
p_{\sigma}:=\Prob{n/2-\sigma < \ones{x} < n/2+\sigma}\leq 2\sigma\cdot\sqrt[]{2/n}.
\]
Now let us define the probability that an individual $x$ is initialised on the outer regions with $\ones{x}\leq n/2 - \sigma$ ones ($0^n$ branch) or $\ones{x}\leq n/2 + \sigma$ ones ($1^n$ branch) as $p_{0}$ and $p_{1}$, respectively. Note that both $p_{0}$ and $p_{1}$ are symmetric, and $p_{0} + p_{1}:=1-p_{\sigma}$, and by rewriting we obtain $p_{0}:=\frac{1-p_{\sigma}}{2}$ (the same for $p_{1}$) with its complement being $1-\frac{1-p_{\sigma}}{2}= \frac{1+p_{\sigma}}{2}$.

So the probability of having no individual with at most $n/2-\sigma$ ones is $(1-p_1)^\mu = \left(\frac{1+p_\sigma}{2}\right)^\mu$, and the same holds for having no individual with at least $n/2 + \sigma$ ones. Hence the probability of being initialised as stated in the statement of the lemma is at least
\[
1-2\left(\frac{1+p_{\sigma}}{2}\right)^\mu = 1 - 2^{-\mu+1} \cdot (1+p_\sigma)^\mu.
\]
Plugging in $p_\sigma$ and using the inequality $1+x\leq e^x$ as well as $\sigma\mu = o(\sqrt[]{n})$ we simplify the last term as
\[
 (1+p_\sigma)^\mu \le e^{2\sigma\mu \sqrt[]{2/n}} = e^{o(1)}=\frac{1}{e^{-o(1)}}\leq \frac{1}{1-o(1)} = 1+o(1),
\]
and by plugging all together we have $1-2^{-\mu+1}(1+o(1))$.
\end{proof}

We also show the following time bound, which assumes that the \muea never decreases the best fitness on a considered branch. We will later show in the proof of Theorem~\ref{the:positive-result-for-rts} that this assumption is met with high probability.
\begin{lemma}
\label{lem:time-mu-n-log-n}
Consider one branch of \twomax and a \muea with a replacement selection where the best fitness of all individuals on this branch never decreases. If the \muea is initialised with at least one individual on the branch then the optimum of the branch is found within time $2e\mu n \ln n$ with probability $1-1/n$ and in expectation.
\end{lemma}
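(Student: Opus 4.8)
The plan is to use a standard fitness-level / drift argument for the best individual on the considered branch. Call this branch the $1^n$ branch without loss of generality, so that "best fitness on this branch" means the maximal number of ones among individuals currently on the branch, and by assumption this quantity never decreases. Let $i$ denote the current best number of ones on the branch, so $i < n$ while the branch optimum has not been found. The key observation is that in any generation there is at least one individual $x$ attaining the best value $i$, so with probability at least $1/\mu$ this $x$ is chosen as the parent in Line~3. Conditioned on that, standard bit mutation creates an offspring $y$ with exactly $i+1$ ones (and all the old ones retained) with probability at least $\frac{n-i}{n}\cdot\left(1-\frac1n\right)^{n-1} \ge \frac{n-i}{en}$. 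Such a $y$ is then on the branch and has strictly larger fitness than its closest competitor $z$ in $P^*_t$ unless $z$ already has $\ge i+1$ ones on the same branch --- but in the latter case the best fitness on the branch has already improved, which is exactly the event we want. In either case, the best number of ones on the branch strictly increases. Hence the probability of an improvement in one generation is at least $\frac{1}{\mu}\cdot\frac{n-i}{en} = \frac{n-i}{e\mu n}$.

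From here I would sum geometric waiting times over the fitness levels $i = 0, 1, \dots, n-1$. The expected time to leave level $i$ is at most $\frac{e\mu n}{n-i}$, so the expected time to reach the optimum is at most $\sum_{i=0}^{n-1}\frac{e\mu n}{n-i} = e\mu n \sum_{k=1}^{n}\frac1k = e\mu n \cdot H_n \le e\mu n(\ln n + 1) \le 2e\mu n\ln n$ for $n$ large enough, which gives the expectation bound. For the high-probability statement I would apply Markov's inequality in a tail-amplification style, or more cleanly invoke a tail bound for sums of independent geometric random variables: the total runtime is dominated by $\sum_{i=0}^{n-1}\mathrm{Geom}(p_i)$ with $p_i \ge \frac{n-i}{e\mu n}$, whose expectation is at most $e\mu n H_n$, and a standard concentration inequality for such sums (e.g.\ the one in Doerr's analysis toolkit, or simply splitting the run into $\Theta(\log n)$ phases of length $\Theta(\mu n)$ and using that each phase independently halves the remaining gap with constant probability) yields that $2e\mu n\ln n$ steps suffice with probability $1-1/n$.

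One technical point that needs care, and which I expect to be the only real subtlety, is the claim that an improving offspring actually gets \emph{accepted} rather than merely created: the offspring $y$ with $i+1$ ones competes with the nearest of $w$ uniformly sampled individuals, and it is only guaranteed to be accepted if that nearest individual has fitness $\le f(y)$. Since $y$ has $i+1$ ones on the $1^n$ branch, and every individual currently on that branch has at most $i$ ones by assumption (otherwise we are already done), the offspring beats any on-branch competitor; an off-branch competitor has at most $n - 1 < n$... here one must be slightly careful, because an individual on the $0^n$ branch can have high \twomax-fitness too. However, if the chosen competitor $z$ lies on the $0^n$ branch with fitness $> f(y)$, then $y$ is rejected but the best fitness on the $1^n$ branch is unchanged, so the assumption "best fitness on the branch never decreases" is still respected --- we have simply failed to make progress this step, and such a failure only lengthens the waiting time by a constant factor at worst. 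A cleaner way to package this: I would argue that the event "parent is a current best-on-branch individual \emph{and} mutation adds exactly one one \emph{and} the sampled window contains at least one individual that is itself on the $1^n$ branch" already forces acceptance and improvement, and this compound event still has probability $\Omega\!\left(\frac{1}{\mu}\cdot\frac{n-i}{n}\right)$ because, given at least one on-branch individual exists, a window of $w \ge 2.5\mu\ln n$ samples misses all on-branch individuals only with probability at most $(1-1/\mu)^w = n^{-\Omega(1)}$, which is absorbed into the constants. With that compound event in hand the fitness-level computation above goes through unchanged.
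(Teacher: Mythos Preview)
Your fitness-level argument for the expectation bound is correct and numerically matches the paper: both routes use the same one-step improvement probability $\tfrac{1}{\mu}\cdot\tfrac{n-i}{en}$ (equivalently a multiplicative drift of $1/(e\mu n)$ on the distance to the optimum) and arrive at $e\mu n\,H_n \le 2e\mu n\ln n$. The paper, however, does not sum geometric waiting times. It sets $X_t$ to be the Hamming distance of the best-on-branch individual to the branch optimum, derives $\E{X_{t+1}\mid X_t}\le (1-\tfrac{1}{e\mu n})X_t$, and then invokes the multiplicative drift theorem \emph{with tail bounds} (Doerr--Goldberg), which yields both the expectation and the $1-1/n$ probability statement in a single step. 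Your proposal leaves the high-probability part at the level of ``apply some concentration bound for sums of geometrics / phase argument''; that can be made to work, but it is strictly more effort than the paper's one-line application of multiplicative drift, and you should be aware that this is precisely the situation multiplicative drift with tails was designed for.

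The acceptance discussion in your last paragraph is misplaced at the level of this lemma. The lemma is stated for an \emph{abstract} \muea whose replacement rule has the monotonicity property ``best fitness on the branch never decreases''; it does not mention RTS, windows, or $w$. Importing the condition $w\ge 2.5\mu\ln n$ into the lemma's proof smuggles in a hypothesis the lemma does not have. In the paper, the acceptance question is handled where it belongs, in the proof of Theorem~\ref{the:positive-result-for-rts}: there one shows that with the stated $w$ the tournament contains a close search point (in particular the parent) with probability $1-n^{-2.5}$, so an improving offspring beats its competitor and enters the population; a union bound over $O(\mu n\log n)$ steps then justifies applying the lemma. You are right that the lemma's own proof silently treats ``create a better offspring'' as ``$X_t$ decreases''; the clean fix is to read the lemma's hypothesis as including that strictly improving offspring on the branch are accepted (which is how it is actually used), not to hard-wire RTS-specific machinery into a generic statement.
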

\begin{proof}
We apply the multiplicative drift theorem with tail bounds~\cite{DoerrGoldberg2010} to random variables $X_t$ that describe the Hamming distance of the closest individual to the targeted optimum. Note that $X_0 \le n/2$ as we start with an individual on the considered branch and the optimum has been found once $X_t = 0$.

The probability of selecting an individual with Hamming distance $X_t$ is at least $1/\mu$. In order to create a better individual, it is sufficient that one of the $X_t$ differing bits is flipped and the other bits remain unchanged. Each bit flip has a probability of being mutated of $1/n$ and the remaining bits remain unchanged with probability $(1-1/n)^{n-1}$. Hence, the probability of creating an individual with a smaller Hamming distance is bounded as follows:
\[
\prob(X_{t+1} < X_t \mid X_t) \ge \frac{1}{\mu}\cdot\frac{X_t}{n}\cdot\left(1-\frac{1}{n}\right)^{n-1}\geq\frac{X_t}{\mu en}.
\]
This implies
\[
\E{X_{t+1}\mid X_t} \le \left(1-\frac{1}{e\mu n}\right) X_t.
\]
Applying Theorem~1 in~\cite{DoerrGoldberg2010} yields that the time till the optimum is found is at most $e\mu n \cdot (\ln(n/2) + \ln n) \le 2e\mu n \ln n$ with probability at most $1/n$ and in expectation.
\end{proof}

Using these two lemmas, we can now prove Theorem~\ref{the:positive-result-for-rts}.
\begin{proof}[Proof of Theorem~\ref{the:positive-result-for-rts}]
According to Lemma~\ref{lem:restriselnich-good-init}, with probability $1-2^{-\mu+1}(1+o(1))$ the initial population contains at least one search point with at most $n/2-\log n$ ones and at least one search point with at least $n/2+\log n$ ones. We assume in the following that this has happened. The probability of mutation flipping at least $\log n$ bits is at most $1/(\log n)! = n^{-\Omega(\log \log n)}$. Taking the union bound over $O(\mu n \log n)$ steps still gives a superpolynomially small error probability. In the following, we work under the assumption that mutation never flips more than $\log n$ bits.

We call two search points \emph{close} if their genotypic distance is at most $\log n$. Due to our assumption on mutations, every newly created offspring is close to its parent. Note that on \twomax the phenotypic distance of any two search point is bounded from above by the genotypic distance, hence close search points also have a phenotypic distance of at most $\log n$. Note that, whenever the tournament contains a search point that is close to the new offspring, either the offspring or a close search point will be removed. If this always happens, the best individual on any branch cannot be eliminated by an offspring on the opposite branch; recall that initially, the best search points on the two branches have phenotypic distance at least $2\log n$, and this phenotypic distance increases if the best fitness on any branch improves. When genotypic distances are being used, the genotypic distance is always at least $2\log n$.

Since each offspring has at least one close search point (its parent), the probability that the tournament does not contain any close search point is at most $(1-1/\mu)^w \le e^{-w/\mu} = e^{-2.5\ln n} = 1/n^{2.5}$. So long as the best individual on any branch does not get replaced by any individuals on the opposite branch, the conditions of Lemma~\ref{lem:time-mu-n-log-n} are met. Applying Lemma~\ref{lem:time-mu-n-log-n} to both branches, by the union bound the probability of both optima being found in time $2e\mu n \ln n$ is at least $1-2/n$. The probability that in this time a tournament occurs that does not involve a close search point is $O(\mu n \log n) \cdot 1/n^{2.5} = o(1/n)$ as $\mu = o(\sqrt{n}/\log n)$.

All failure probabilities sum up to (assuming $n$ large enough)
\begin{align*}
& \frac{2}{n} + o\left(\frac{1}{n}\right) + 2^{-\mu+1} (1+o(1)) + \frac{O(\mu n \log n)}{n^{-\Omega(\log \log n)}}
\le \frac{4}{n} + 2^{-\mu+2} \le 2^{-\mu'+3}
\end{align*}
where the last inequality follows as $2^{-\mu} \le 2^{-\mu'}$ and $1/n \le 2^{-\mu'}$.
\end{proof}

In Theorem~\ref{the:positive-result-for-rts} we chose $w$ so large that every tournament included the offspring's parent with high probability. Then the \muea behaves like the \muea with deterministic crowding~\cite{Friedrich2009}, leading to similar success probabilities (see Table~\ref{tab:divmech}). 

A success probability around $1-2^{-\mu+1}$ is best possible for many diversity mechanisms as with probability $2^{-\mu+1}$ the whole population is initialised on one branch only (for odd~$n$), and then it is likely that only one optimum is reached. Methods like fitness sharing and clearing obtain success probabilities of~1 by more aggressive methods that can force individuals to travel from one branch to the other by accepting worse search points along the way. The performance of restricted tournament selection (and that of deterministic crowding) is hence best possible amongst all mechanisms that do not allow worse search points to enter the population.

\subsection{Small Window Sizes Can Fail}
\label{sec:small-w}

We now turn our attention to small $w$. If the $w$ is small in comparison to $\mu$, the possibility emerges that the tournament only contains individuals that are far from the offspring. In that case even the closest individual in the tournament will be dissimilar to the offspring, resulting in a competition between individuals from different ``niches'' (\ie sets of similar individuals). The following theorem and its proof show that this may result in one branch taking over the other branch, even when the branch to get extinct is very close to a global optimum. The resulting expected optimisation time is exponential.

\begin{theorem}
\label{the:badresrts}
Let $\mu \le n/8$. The probability that the \muea with restricted tournament selection with $w\ge 2$ and either genotypic or phenotypic distances finds both optima on \twomax in time $n^{n-1}$ is at most $O(\mu^w/n)$. If $\mu \le \varepsilon n^{1/w}$ for a sufficiently small constant~$\varepsilon > 0$ then the expected time for finding both optima is $\Omega(n^n)$.
\end{theorem}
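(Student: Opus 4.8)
The plan is to exploit the fact that, for a small window size~$w$, the niching in RTS is almost inactive near the valley of \twomax: the tournament rarely contains any individual close to the offspring, so offspring routinely compete with, and replace, individuals on the opposite branch. I would argue in three phases. (i)~After initialisation the whole population lies in a narrow band around $n/2$ ones. (ii)~Inside this band the number of individuals on each branch performs a biased random walk that, with probability $1-O(\mu^w/n)$, is absorbed at an extreme (one branch extinct) before either branch escapes the band, hence before either optimum is found. (iii)~The surviving branch then climbs to its optimum and the population freezes at the all-ones (or all-zeroes) state, from which the opposite optimum is reachable only by flipping all $n$ bits at once.

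For phase~(i): by Chernoff bounds, with probability $1-2^{-\Omega(n)}$ every member of $P_0$ has $n/2\pm O(\sqrt{n\log n})$ ones, so the population starts in a band $B$ of width $O(\sqrt{n\log n})$ around the valley and no individual is within distance $o(n)$ of either optimum; I would also record, as in the proof of Theorem~\ref{the:positive-result-for-rts}, that whp no mutation in the first $n^{n-1}$ steps flips more than $\sqrt n$ bits. The crucial quantitative observation is that the RTS tournament contains the offspring's parent --- inside $B$, the only individual guaranteed close to it --- only with probability $1-(1-1/\mu)^w=O(w/\mu)$, so in almost every step the offspring instead competes with the closest of $w$ individuals none of which need be close, and that closest individual can lie across the valley.

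For phase~(ii), the technical heart: while the population stays in $B$, track the number $I_t$ of individuals on the \onemax-branch. Cross-branch replacements change $I_t$ by $\pm1$, and the probabilities of a $+1$ and of a $-1$ move are, up to constant factors, $\tfrac{\mu-I_t}{\mu}\,q(I_t)$ and $\tfrac{I_t}{\mu}\,q(\mu-I_t)$, where $q(k)\le(k/\mu)^w$ bounds the probability that the tournament forces the closest individual onto the $k$-sized branch; the exponent $w$ makes $I_t=0$ and $I_t=\mu$ the attracting states. A gambler's-ruin/absorption estimate then gives that $I_t$ reaches $0$ or $\mu$ within $O(\mu\sqrt n\log n)$ steps, which is before the best individual on the losing branch can have left $B$ (escaping $B$ needs $\Omega(\sqrt n)$ improving steps, i.e.\ $\Omega(\mu\sqrt n)$ steps, at a rate smaller by a factor linked to $n$), except with probability $O(\mu^w/n)$ --- the $\mu^w$ coming from the $w$ tournament draws and the $1/n$ from the $\Theta(\sqrt n)$ band scale combined with the $\Omega(\sqrt n)$ improving steps needed to escape it. A union bound over the two branches finishes phase~(ii). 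For phase~(iii): once one branch holds all of $P$, RTS acts on that branch like an elitist hill-climber --- the minimum number of ones cannot decrease, and (by the flip-count bound) no offspring crosses back to the other branch while that minimum is more than a constant away from $n/2$ --- so Lemma~\ref{lem:time-mu-n-log-n} gives a climb to all-$1^n$ in $O(\mu n\log n)$ steps whp, the hypothesis $\mu\le n/8$ keeping this (and the band argument) valid. From all-$1^n$ only a mutation flipping all $n$ bits produces $0^n$, which has probability $n^{-n}$ per step; hence both optima are not found within $n^{n-1}$ steps with probability $1-O(\mu^w/n)$, and since this trap has expected exit time $n^n$, when $\mu\le\varepsilon n^{1/w}$ makes $O(\mu^w/n)$ a small constant the expected time is $\Omega(n^n)$.

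I expect phase~(ii) to be the main obstacle. The transition probabilities of $I_t$ are not functions of $\mu$ and $w$ alone but depend on the random, drifting configuration inside $B$ --- a branch-$B$ individual sitting at the valley can be the closest one even when the tournament also contains branch-$A$ individuals --- so I would have to couple $I_t$ with a deliberately pessimistic one-dimensional walk that discards this position information. Two further points need care: showing the branches do not separate before absorption (separation switches off all cross-branch competition and would let both branches survive, precisely the unlikely good event), and showing that the short-lived re-colonisations of an extinct branch during the climb in phase~(iii) never spawn a lineage that reaches the far optimum; the latter I would settle by a race argument (the $\mu-1$ survivors outrun any lone re-coloniser towards their optimum) together with the whp bound on flipped bits.
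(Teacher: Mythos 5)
Your overall route is genuinely different from the paper's, but its core step is missing rather than proved. The paper does not attempt to show that one branch dies out inside the initial band around $n/2$; it waits until the \emph{first} optimum (say $0^n$) is created and then runs a takeover race: with $i$ copies of $0^n$ present, the good event of gaining a copy has probability at least $\frac{i}{4\mu}\left(\frac{\mu-i}{\mu}\right)^w$ (choose a copy as parent, mutate nothing, and have all $w$ tournament draws miss the existing copies), while the bad event of creating $1^n$ in a single generation has probability at most $1/n$ irrespective of the population's configuration; multiplying $\Prob{G_i\mid G_i\cup B_i}$ over $i=1,\dots,\mu$ and bounding $\sum_i 1/(i(\mu-i)^w)=O(1/\mu)$ yields the $1-O(\mu^w/n)$ takeover probability, after which only an all-bit flip (probability $n^{-n}$) can ever produce $1^n$. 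Your phase~(ii) is the analogue of this race, and it is exactly the part you have not established: the bound $q(k)\le (k/\mu)^w$ does not bound the true $\pm 1$ transition probabilities of $I_t$, because the closest tournament member is determined by positions, not by branch counts (you concede this, and the ``deliberately pessimistic'' coupling is never constructed); the claimed $O(\mu^w/n)$ failure probability for absorption-before-escape is only asserted via a dimensional heuristic; and confining the analysis to the band is neither clearly sufficient nor necessary, since after the branches separate in position, cross-branch replacements still occur whenever an entire tournament lands on the opposite branch and are then decided by fitness alone, so the extinction dynamics do not stop at the band boundary. Note also that in the paper the factor $\mu^w$ comes from the \emph{end} of the takeover (the tournament must avoid the $i\approx\mu$ existing copies of the optimum), a mechanism with no counterpart in your sketch, which makes it doubtful that your plan reproduces the stated bound at all.

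Two further statements in your phase~(iii) are wrong or unsupported as written. RTS only guarantees that the replaced individual is no fitter than the offspring, so ``the minimum number of ones cannot decrease'' is false near the valley: an offspring with $k<n/2$ ones has fitness $n-k$ and can replace a tournament-closest individual with more ones but lower fitness, re-colonising the supposedly extinct branch; similarly, the best individual \emph{of a branch} can be displaced by an equally fit individual from the other branch, so Lemma~\ref{lem:time-mu-n-log-n} cannot be invoked without an extra argument that such events do not occur or do not matter. Your race argument against re-colonisers is plausible but again only a sketch. The paper's decision to start the analysis only once the first optimum exists, and to take as the bad event ``$1^n$ is created in one generation'' (probability at most $1/n$ wherever the population sits), is precisely what makes all of this positional bookkeeping unnecessary; as it stands, your proposal replaces a short, self-contained argument by several unproven lemmas, the central one of which (absorption with failure probability $O(\mu^w/n)$) is the theorem's real content.
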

Note that the probability of finding both optima in $n^{n-1}$ generations is $o(1)$ if $w = O(1)$ and $\mu$ grows slower than the polynomial $n^{1/w}$.
It also holds if
$w \le c(\ln n)/\ln \ln n$ for some constant $0 < c < 1$ and $\mu = O(\log n)$ as then $n^{1/w} = e^{(\ln n)/w} \ge e^{(\ln \ln n)/c} = (\ln n)^{1/c} = \omega(\log n)$, which shows $\mu^w/n = o(1)$.
\begin{proof}[Proof of Theorem~\ref{the:badresrts}]
The analysis follows the proof of~\cite[Theorem~1]{Friedrich2009}. We assume that the initial population contains at most one global optimum as the probability of both optima being found during initialization is at most $\mu\cdot 2^{-n} = O(\mu^w/n)$.

We consider the first point of time at which the first optimum is being bound. Without loss of generality, let us assume that this is $0^n$. Then we show that with high probability copies of $0^n$ take over the population before the other optimum~$1^n$ is found.

Let~$i$ be the number of copies of the~$0^n$ individuals in the population, then a good event~$G_i$ (good in a sense of leading towards extinction as we are aiming at a negative result) is to increase this number from~$i$ to~$i+1$. For this it is just necessary to create copies of one of the~$i$ individuals. For~$n\geq 2$ we have~$\Prob{G_i}\geq\frac{i}{\mu}\cdot\left(1-\frac{1}{n}\right)^n\cdot\left(\frac{\mu-i}{\mu}\right)^w\geq\frac{i}{4\mu}\cdot\left(\frac{\mu-i}{\mu}\right)^w$ since it suffices to select one out of~$i$ individuals and to create a copy of the selected individual, and to select~$w$ times individuals from the remaining~$\mu-i$ individuals. On the other hand, a bad event~$B_i$ is to create an~$1^n$ individual in one generation. This probability is clearly bounded by  $\Prob{B_i}\leq\frac{1}{n}$ as every individual with at least one zero bit has to flip said bit to create~$1^n$. Together, the probability that the good event~$G_i$ happens before the bad event~$B_i$ is
\begin{align*}
 \Prob{G_i\mid G_i\cup B_i}\geq\;& \frac{\Prob{G_i}}{\Prob{G_i}+\Prob{B_i}}
 \geq\frac{\frac{i}{4\mu}\cdot\left(\frac{\mu-i}{\mu}\right)^w}{\frac{i}{4\mu}\cdot\left(\frac{\mu-i}{\mu}\right)^w+\frac{1}{n}} \\
 =\;& 1-\frac{\frac{1}{n}}{\frac{i}{4\mu}\cdot\left(\frac{\mu-i}{\mu}\right)^w+\frac{1}{n}}
 \geq 1-\frac{4\mu}{in \cdot ((\mu-i)/\mu)^w}.
\end{align*}
The probability that the $i$ individuals take over the population before $1^n$ is found is therefore at least
\[
\prod_{i=1}^{\mu}\Prob{G_i\mid G_i\cup B_i}\geq\prod_{i=1}^{\mu}\bigg(1-\frac{4\mu}{in \cdot ((\mu-i)/\mu)^w}\bigg).
\]

Using $\frac{4\mu}{n}\leq \frac{1}{2}$ and $1-x\geq e^{-2x}$ for $x\leq \frac{1}{2}$, we obtain
\begin{align*}
\prod_{i=1}^{\mu}\left(1-\frac{4\mu}{in \cdot ((\mu-i)/\mu)^w}\right)&\geq\prod_{i=1}^{\mu}\exp\left(-\frac{8\mu}{in \cdot ((\mu-i)/\mu)^w}\right)\\
&=\exp\left(-\frac{8\mu}{n}\cdot\sum_{i=1}^{\mu-1}\frac{1}{i \cdot ((\mu-i)/\mu)^w}\right)\\
&=\exp\left(-\frac{8\mu}{n}\cdot\mu^{w}\sum_{i=1}^{\mu-1}\frac{1}{i\cdot\left(\mu-i\right)^w}\right).
\end{align*}

Note that the summands are non-increasing with $w$. So the worst case is having the smallest possible $w\geq 2$, so we can bound this sum from above in the following way:
\begin{align*}
\sum_{i=1}^{\mu-1} \frac{1}{i \cdot (\mu-i)^2}
\le\;& \sum_{i=1}^{\lfloor \mu/2 \rfloor} \frac{1}{i \cdot (\mu-i)^2} + \sum_{i=\lceil \mu/2 \rceil}^{\mu-1} \frac{1}{i \cdot (\mu-i)^2}\\
\le\;& \sum_{i=1}^{\lfloor \mu/2 \rfloor} \frac{1}{i \cdot (\mu/2)^2} + \sum_{i=\lceil \mu/2 \rceil}^{\mu-1} \frac{1}{\mu/2 \cdot (\mu-i)^2}\\
\le\;& \frac{4}{\mu^2} \sum_{i=1}^{\lfloor \mu/2 \rfloor} \frac{1}{i} + \frac{2}{\mu} \sum_{i=1}^{\infty} \frac{1}{i^2} = O\left(\frac{1}{\mu}\right)
\end{align*}
as $\sum_{i=1}^{\lfloor \mu/2 \rfloor} \frac{1}{i} = O(\log \mu)$ and $\sum_{i=1}^\infty 1/i^2 = \pi^2/6$.
Together we have
\begin{align*}
\prod_{i=1}^{\mu}\Prob{G_i\mid G_i\cup B_i}&\geq\exp\left(-\frac{8\mu}{n}\cdot \mu^w \cdot O\left(\frac{1}{\mu}\right)\right)
\geq 1-O\left(\frac{\mu^w}{n}\right).
\end{align*}

Once the population consists only of copies of $0^n$, a mutation has to flip all $n$ bits to find the $1^n$ optimum. This event has probability $n^{-n}$ and, by the union bound, the probability of this happening in a phase consisting of $n^{n-1}$ generations is at most $\frac{1}{n} = O(\mu^w/n)$. The sum of all failure probabilities is $O(\mu^w/n)$, which proves the first claim. For the second claim, observe that the conditional expected optimization time is $n^n$ once the population has collapsed to copies of $0^n$ individuals. As this situation occurs with probability at least $1-O(\mu^w/n) = \Omega(1)$ if the constant $\varepsilon$ in $\mu \le \varepsilon n^{1/w}$ is sufficiently small, the unconditional expected optimization time is $\Omega(n^n)$.
\end{proof}

\section{Experiments}
\label{sec:exp}

We provide an experimental analysis as well in order to see how closely the theory matches the empirical performance for reasonable problem sizes, and to investigate a wider range of parameters, where the theoretical results are not applicable. Our analysis is focused on the \muea with probabilistic crowding and RTS for the \twomax function. We consider exponentially increasing population sizes~$\mu=\{2,4,8,\ldots,1024\}$ for a problem size $n=100$ and for $100$ runs.

Since we are interested in proving how good/bad all algorithms are, we define the following outcomes and stopping criteria for each run. \emph{Success}, both optima of \twomax have been reached, \ie the run is stopped if the population contains both $0^n$ and $1^n$ in the population. \emph{Failure}, once the run has reached $10\mu n\ln n$ generations and the population does not contain both optima. By Lemma~\ref{lem:time-mu-n-log-n}, this time period is long enough to allow any reasonable \muea variant to find one or two global optima with high probability (unless the best fitness on a branch drops frequently). We report the mean of successes and failures for the $100$ runs.

For probabilistic crowding (Algorithm~\ref{alg:mueaprobcrow}), and as proved in Theorem~\ref{the:negprobcrow}, for all $\mu$ sizes, the method is not able to optimise \twomax. In all runs the algorithm failed to reach even one optimum, let alone reaching both. Since the algorithm is not able to find any optimum of \twomax, we ran additional experiments for $n=\{32,64,128,\ldots,16384\}$ and population size $\mu=32$ to observe how far the best lineages evolve from $n/2$ and/or how close the best individuals get to reach an optimum. In Figure~\ref{fig:probcrowbp}, we show the best individuals obtained in each of the $100$ runs and its variance. As soon as $n$ increases, the best fitness in the population starts to concentrate around $n/2$ and reaching a fitness of $(1+\varepsilon) n/2$ becomes very difficult for all constants $\varepsilon > 0$ as $n$ grows. Even the best outliers start to get closer and closer to the average of the population.

\begin{figure}[ht]
    \centering
    \resizebox{.60\linewidth}{!}{
    \begin{tikzpicture}
    	\begin{axis}[
          width=7cm,
          height=6cm,
          boxplot/draw direction=y,
          xlabel=$n$ (logscale),
          xtick={1,2,3,4,5,6,7,8,9,10},
          xticklabels={32,64,128,256,512,1024,2048,4096,8192,16384},
          ytick={0.5,0.6,0.7,0.8,0.9,1},
          ymin=0.5, ymax=1,
          ylabel=$\frac{\twomax(y)}{n}$,
          x tick label style={rotate=45, anchor=north east, inner sep=0mm},
        ]
          \addplot[boxplot,mark=*,color=blue] table[y index=0, row sep=newline]{\fitrawprobcrow};
          \addplot[boxplot,mark=*,color=blue] table[y index=1, row sep=newline]{\fitrawprobcrow};
          \addplot[boxplot,mark=*,color=blue] table[y index=2, row sep=newline]{\fitrawprobcrow};
          \addplot[boxplot,mark=*,color=blue] table[y index=3, row sep=newline]{\fitrawprobcrow};
          \addplot[boxplot,mark=*,color=blue] table[y index=4, row sep=newline]{\fitrawprobcrow};
          \addplot[boxplot,mark=*,color=blue] table[y index=5, row sep=newline]{\fitrawprobcrow};
          \addplot[boxplot,mark=*,color=blue] table[y index=6, row sep=newline]{\fitrawprobcrow};
          \addplot[boxplot,mark=*,color=blue] table[y index=7, row sep=newline]{\fitrawprobcrow};
          \addplot[boxplot,mark=*,color=blue] table[y index=8, row sep=newline]{\fitrawprobcrow};
          \addplot[boxplot,mark=*,color=blue] table[y index=9, row sep=newline]{\fitrawprobcrow};
    	\end{axis}
    \end{tikzpicture}
    }
    \caption{The normalised best fitness $\twomax/n$ reached among $100$ runs at the time both optima were found or the $t=10\mu n \ln n$ generations have been reached on \twomax for $n=\{32,64,128,\ldots,16384\}$ by the \muea with probabilistic crowding with $\mu=32$.}
    \label{fig:probcrowbp}
\end{figure}
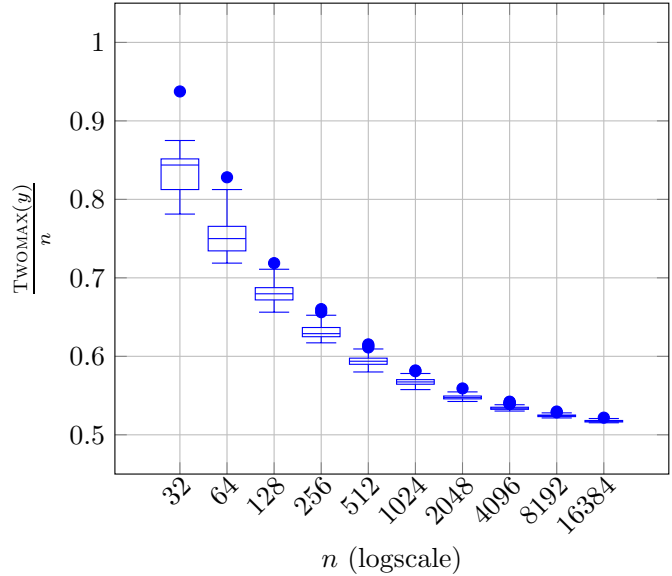

In the case of RTS (Algorithm~\ref{alg:muearestouse}), we ran experiments for $w=\{1,2,4,8,\ldots,1024\}$, however we only plot results up to~$w=128$ as the results for large~$w$ were very similar. Figure~\ref{fig:successrts} shows that for small values of $w$ and $\mu$ the algorithm is not able to maintain individuals on both branches of \twomax for a long period of time, as predicted by Theorem~\ref{the:badresrts}. It is only when the population size is set to $\mu=1024$ (where Theorem~\ref{the:badresrts} does not apply any more since $\mu > n/8$) the algorithm is able to maintain individuals on both branches before the takeover happens. When setting, for example, $w\geq 8$ and $\mu\geq 32$ the algorithm was able to find both optima with both genotypic and phenotypic distances. It is possible to observe a trade-off between $w$ and $\mu$: larger~$w$ allow for a smaller population size $\mu$ to be used. Such a trade-off was also indicated by the probability bound $O(\mu^w/n)$ from Theorem~\ref{the:badresrts}. 
Our experiments show that RTS works well for much smaller window sizes than those required in Theorem~\ref{the:positive-result-for-rts}. As a final remark, the method seems to behave fairly similarly with respect to both distance functions.

\begin{figure}[ht]
    \begin{subfigure}[t]{0.5\textwidth}
        \centering
        \resizebox{\linewidth}{!}{
        	\begin{tikzpicture}	
    			\begin{axis}[
                    width=7cm,
        			height=6cm,
                    xlabel=$\mu$ (logscale),
        			xtick={1,2,3,4,5,6,7,8,9,10},
                    xticklabels={2,4,8,16,32,64,128,256,512,1024},
        			every axis y label/.style={rotate=0, black, at={(-0.13,0.5)},},
                    legend columns=4,
                    legend style={at={(0.5,-0.16)},anchor=north},
            		legend cell align=left,
					]
						\addplot table[x=mu, y=w1]{\successrtsg};
                    	\addplot table[x=mu, y=w2]{\successrtsg};
                        \addplot table[x=mu, y=w4]{\successrtsg};
                        \addplot table[x=mu, y=w8]{\successrtsg};
                        \addplot table[x=mu, y=w16]{\successrtsg};
                        \addplot table[x=mu, y=w32]{\successrtsg};
                        \addplot table[x=mu, y=w64]{\successrtsg};
                        \addplot table[x=mu, y=w128]{\successrtsg};
                        \legend{$w=1$, $w=2$, $w=4$, $w=8$, $w=16$, $w=32$, $w=64$, $w=128$, $w=256$, $w=512$, $w=1024$};
    			\end{axis}
		\end{tikzpicture}
		}
        \caption{Genotypic}
        \label{fig:successrtsg}
    \end{subfigure}
    \begin{subfigure}[t]{0.5\textwidth}
        \centering
        \resizebox{\linewidth}{!}{
        	\begin{tikzpicture}	
        	\begin{axis}[
        		width=7cm,
        		height=6cm,
        		xlabel=$\mu$ (logscale),
        		xtick={1,2,3,4,5,6,7,8,9,10},
                xticklabels={2,4,8,16,32,64,128,256,512,1024},
        		every axis y label/.style={rotate=0, black, at={(-0.13,0.5)},},
                legend columns=4,
                legend style={at={(0.5,-0.16)},anchor=north},
            	legend cell align=left,
        	]
        		\addplot table[x=mu, y=w1]{\successrtsp};
                \addplot table[x=mu, y=w2]{\successrtsp};
                \addplot table[x=mu, y=w4]{\successrtsp};
                \addplot table[x=mu, y=w8]{\successrtsp};
                \addplot table[x=mu, y=w16]{\successrtsp};
                \addplot table[x=mu, y=w32]{\successrtsp};
                \addplot table[x=mu, y=w64]{\successrtsp};
                \addplot table[x=mu, y=w128]{\successrtsp};
                \legend{$w=1$, $w=2$, $w=4$, $w=8$, $w=16$, $w=32$, $w=64$, $w=128$, $w=256$, $w=512$, $w=1024$};
        	\end{axis}
		\end{tikzpicture}  	
		}
        \caption{Phenotypic}
        \label{fig:successrtsp}
    \end{subfigure}
    \caption{The number of successful runs measured among $100$ runs at the time both optima were found on \twomax or $t=10\mu n \ln n$ generations have been reached for $n=100$ with the \muea with restricted tournament selection with $\mu=\{2,4,8,\ldots,1024\}$, $w=\{1,2,4,8,\ldots,128\}$, genotypic and phenotypic distance.}
    \label{fig:successrts}
\end{figure}
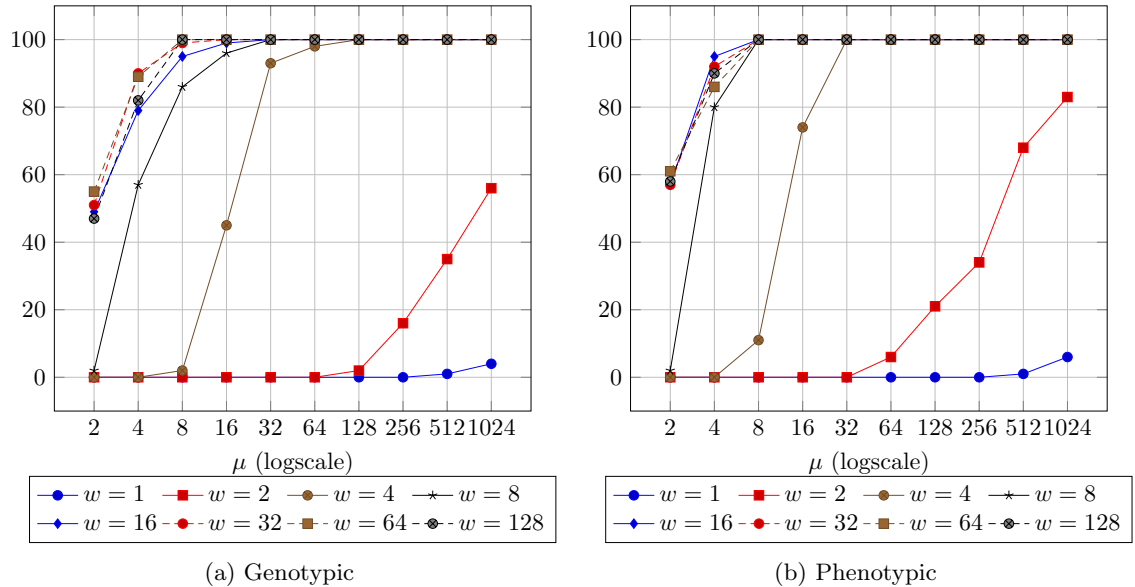

\section{Conclusions}
\label{sec:con}
We have examined theoretically and empirically the behaviour of two different niching mechanisms embedded into a simple \muea on \twomax. We rigorously proved that probabilistic crowding fails miserably; it is not even able to evolve search points that are significantly better than those found by random search, even when given exponential time. The reason is that fitness-proportional selection for survival selection works very similar to uniform selection, and then the algorithm performs an almost blind search on $\mu$ independent lineages. 

Our results highlight the importance of scaling the fitness, as done in~\cite{Ballester2004,Mengshoel2008,Mengshoel2014}. An open question is whether fitness scaling would enable probabilistic crowding to find both optima on \twomax, and if so, how much the fitness needs to be scaled. A proof where fitness scaling has helped for a variant of the Simple GA on \onemax was given in~\cite{Neumann2009}. We are confident that the proof arguments used here can also be used to analyse more advanced versions of crowding~\cite{Galan2010,Mengshoel2014}.

The performance of restricted tournament selection seems to vary a lot with the parameters involved. We have shown that if $\mu$ and $w$ are set too small, one subpopulation may get extinct. But if $w$ is large enough then RTS behaves similarly to deterministic crowding. For both the probability of finding both optima is close to $1-2^{-\mu+1}$, hence converging to~1 very quickly as $\mu$ grows. It still an open problem to theoretically analyse the population dynamics of RTS for intermediate values for $w$. Our experiments show that RTS can optimise \twomax for smaller $w$ than the one required in Theorem~\ref{the:positive-result-for-rts}.

\section*{Acknowledgements}
The authors would like to thank the anonymous reviewers from GECCO '18 for their many valuable suggestions. We also thank the Consejo Nacional de Ciencia y Tecnolog\'{i}a --- CONACYT (the Mexican National Council for Science and Technology) for the financial support under the grant no.\ 409151 and registration no.\ 264342. 


\bibliographystyle{abbrvnat} 
\bibliography{sigproc} 

\begin{thebibliography}{28}
\providecommand{\natexlab}[1]{#1}
\providecommand{\url}[1]{\texttt{#1}}
\expandafter\ifx\csname urlstyle\endcsname\relax
  \providecommand{\doi}[1]{doi: #1}\else
  \providecommand{\doi}{doi: \begingroup \urlstyle{rm}\Url}\fi

\bibitem[Ballester and Carter(2004)]{Ballester2004}
P.~J. Ballester and J.~N. Carter.
\newblock An effective real-parameter genetic algorithm with parent centric
  normal crossover for multimodal optimisation.
\newblock In \emph{Proceedings of the Genetic and Evolutionary Computation
  Conference}, GECCO '04, pages 901--913. Springer Berlin Heidelberg, 2004.
\newblock ISBN 978-3-540-24854-5.

\bibitem[Covantes~Osuna and Sudholt(2017)]{Covantes2017a}
E.~Covantes~Osuna and D.~Sudholt.
\newblock Analysis of the clearing diversity-preserving mechanism.
\newblock In \emph{Proceedings of the 14th ACM/SIGEVO Conference on Foundations
  of Genetic Algorithms}, FOGA '17, pages 55--63. ACM, 2017.
\newblock ISBN 978-1-4503-4651-1.
\newblock \doi{10.1145/3040718.3040731}.

\bibitem[{\v{C}}repin{\v{s}}ek et~al.(2013){\v{C}}repin{\v{s}}ek, Liu, and
  Mernik]{Crepinsek2013}
M.~{\v{C}}repin{\v{s}}ek, S.-H. Liu, and M.~Mernik.
\newblock Exploration and exploitation in evolutionary algorithms: A survey.
\newblock \emph{ACM Comput. Surv.}, 45\penalty0 (3):\penalty0 35:1--35:33,
  2013.
\newblock ISSN 0360-0300.
\newblock \doi{10.1145/2480741.2480752}.

\bibitem[Doerr(2018)]{Doerr2018}
B.~Doerr.
\newblock {Probabilistic Tools for the Analysis of Randomized Optimization
  Heuristics}.
\newblock \emph{ArXiv e-prints}, Jan. 2018.

\bibitem[Doerr and Goldberg(2010)]{DoerrGoldberg2010}
B.~Doerr and L.~A. Goldberg.
\newblock Drift analysis with tail bounds.
\newblock In \emph{Parallel Problem Solving from Nature, PPSN XI}, pages
  174--183. Springer Berlin Heidelberg, 2010.
\newblock ISBN 978-3-642-15844-5.

\bibitem[Friedrich et~al.(2009)Friedrich, Oliveto, Sudholt, and
  Witt]{Friedrich2009}
T.~Friedrich, P.~S. Oliveto, D.~Sudholt, and C.~Witt.
\newblock {Analysis of Diversity-preserving Mechanisms for Global Exploration}.
\newblock \emph{Evolutionary Computation}, 17\penalty0 (4):\penalty0 455--476,
  2009.
\newblock ISSN 1063-6560.
\newblock \doi{10.1162/evco.2009.17.4.17401}.

\bibitem[Galan and Mengshoel(2010)]{Galan2010}
S.~F. Galan and O.~J. Mengshoel.
\newblock Generalized crowding for genetic algorithms.
\newblock In \emph{Proceedings of the 12th Annual Conference on Genetic and
  Evolutionary Computation}, GECCO '10, pages 775--782, New York, NY, USA,
  2010. ACM.
\newblock ISBN 978-1-4503-0072-8.
\newblock \doi{10.1145/1830483.1830620}.

\bibitem[Garc\'ia-Hern\'andez et~al.(2015)Garc\'ia-Hern\'andez, Palomo-Romero,
  Salas-Morera, Arauzo-Azofra, and Pierreval]{Garcia2015}
L.~Garc\'ia-Hern\'andez, J.~M. Palomo-Romero, L.~Salas-Morera,
  A.~Arauzo-Azofra, and H.~Pierreval.
\newblock A novel hybrid evolutionary approach for capturing decision maker
  knowledge into the unequal area facility layout problem.
\newblock \emph{Expert Systems with Applications}, 42\penalty0 (10):\penalty0
  4697--4708, 2015.
\newblock ISSN 0957-4174.
\newblock \doi{10.1016/j.eswa.2015.01.037}.

\bibitem[Garc\'ia-Mart\'inez et~al.(2012)Garc\'ia-Mart\'inez, Lozano, and
  Rodr\'iguez-D\'iaz]{Garcia2012}
C.~Garc\'ia-Mart\'inez, M.~Lozano, and F.~Rodr\'iguez-D\'iaz.
\newblock A simulated annealing method based on a specialised evolutionary
  algorithm.
\newblock \emph{Applied Soft Computing}, 12\penalty0 (2):\penalty0 573--588,
  2012.
\newblock ISSN 1568-4946.
\newblock \doi{10.1016/j.asoc.2011.11.007}.

\bibitem[Glibovets and Gulayeva(2013)]{Glibovets2013}
N.~N. Glibovets and N.~M. Gulayeva.
\newblock A review of niching genetic algorithms for multimodal function
  optimization.
\newblock \emph{Cybernetics and Systems Analysis}, 49\penalty0 (6):\penalty0
  815--820, 2013.
\newblock ISSN 1573-8337.
\newblock \doi{10.1007/s10559-013-9570-8}.

\bibitem[Happ et~al.(2008)Happ, Johannsen, Klein, and Neumann]{Happ2008}
E.~Happ, D.~Johannsen, C.~Klein, and F.~Neumann.
\newblock Rigorous analyses of fitness-proportional selection for optimizing
  linear functions.
\newblock In \emph{Proceedings of the Genetic and Evolutionary Computation
  Conference}, GECCO '08, pages 953--960. ACM, 2008.
\newblock ISBN 978-1-60558-130-9.
\newblock \doi{10.1145/1389095.1389277}.

\bibitem[Harik(1995)]{Harik1995}
G.~R. Harik.
\newblock {Finding Multimodal Solutions Using Restricted Tournament Selection}.
\newblock In \emph{{Proceedings of the 6th International Conference on Genetic
  Algorithms}}, pages 24--31. Morgan Kaufmann Publishers Inc., 1995.
\newblock ISBN 1-55860-370-0.

\bibitem[Mengsheol and Goldberg(1999)]{Mengsheol1999}
O.~Mengsheol and D.~Goldberg.
\newblock Probabilistic crowding: Deterministic crowding with probabilistic
  replacement.
\newblock In \emph{{Proceedings of the Genetic and Evolutionary Computation
  Conference}}, {GECCO '99}, pages 409--416, 1999.

\bibitem[Mengshoel and Goldberg(2008)]{Mengshoel2008}
O.~J. Mengshoel and D.~E. Goldberg.
\newblock The crowding approach to niching in genetic algorithms.
\newblock \emph{Evol. Comput.}, 16\penalty0 (3):\penalty0 315--354, 2008.
\newblock ISSN 1063-6560.
\newblock \doi{10.1162/evco.2008.16.3.315}.

\bibitem[Mengshoel et~al.(2014)Mengshoel, Gal\'an, and de~Dios]{Mengshoel2014}
O.~J. Mengshoel, S.~F. Gal\'an, and A.~de~Dios.
\newblock Adaptive generalized crowding for genetic algorithms.
\newblock \emph{Information Sciences}, 258:\penalty0 140--159, 2014.
\newblock ISSN 0020-0255.
\newblock \doi{10.1016/j.ins.2013.08.056}.

\bibitem[Neumann et~al.(2009)Neumann, Oliveto, and Witt]{Neumann2009}
F.~Neumann, P.~S. Oliveto, and C.~Witt.
\newblock Theoretical analysis of fitness-proportional selection: Landscapes
  and efficiency.
\newblock In \emph{Proceedings of the Genetic and Evolutionary Computation
  Conference}, GECCO '09, pages 835--842. ACM, 2009.
\newblock ISBN 978-1-60558-325-9.
\newblock \doi{10.1145/1569901.1570016}.

\bibitem[Oliveto and Witt(2011)]{Oliveto2011}
P.~S. Oliveto and C.~Witt.
\newblock Simplified drift analysis for proving lower bounds in evolutionary
  computation.
\newblock \emph{Algorithmica}, 59\penalty0 (3):\penalty0 369--386, 2011.

\bibitem[Oliveto and Witt(2012)]{Oliveto2012Erratum}
P.~S. Oliveto and C.~Witt.
\newblock {Erratum: Simplified Drift Analysis for Proving Lower Bounds in
  Evolutionary Computation}.
\newblock \emph{ArXiv e-prints}, 2012.

\bibitem[Oliveto and Witt(2014)]{Oliveto2014a}
P.~S. Oliveto and C.~Witt.
\newblock On the runtime analysis of the simple genetic algorithm.
\newblock \emph{Theoretical Computer Science}, 545:\penalty0 2--19, 2014.
\newblock ISSN 0304-3975.
\newblock \doi{10.1016/j.tcs.2013.06.015}.
\newblock Genetic and Evolutionary Computation.

\bibitem[Oliveto and Witt(2015)]{Oliveto2015}
P.~S. Oliveto and C.~Witt.
\newblock Improved time complexity analysis of the simple genetic algorithm.
\newblock \emph{Theoretical Computer Science}, 605:\penalty0 21--41, 2015.
\newblock ISSN 0304-3975.
\newblock \doi{10.1016/j.tcs.2015.01.002}.

\bibitem[Oliveto et~al.(2014)Oliveto, Sudholt, and Zarges]{Oliveto2014b}
P.~S. Oliveto, D.~Sudholt, and C.~Zarges.
\newblock {On the Runtime Analysis of Fitness Sharing Mechanisms}.
\newblock In \emph{{Parallel Problem Solving from Nature -- PPSN XIII}}, volume
  8672 of \emph{{Lecture Notes in Computer Science}}, pages 932--941. Springer
  International Publishing, 2014.
\newblock ISBN 978-3-319-10761-5.
\newblock \doi{10.1007/978-3-319-10762-2_92}.

\bibitem[Qu and Suganthan(2010)]{Qu2010}
B.~Y. Qu and P.~N. Suganthan.
\newblock Novel multimodal problems and differential evolution with ensemble of
  restricted tournament selection.
\newblock In \emph{IEEE Congress on Evolutionary Computation}, pages 1--7,
  2010.
\newblock \doi{10.1109/CEC.2010.5586341}.

\bibitem[Sareni and Krahenbuhl(1998)]{Sareni1998}
B.~Sareni and L.~Krahenbuhl.
\newblock Fitness sharing and niching methods revisited.
\newblock \emph{IEEE Transactions on Evolutionary Computation}, 2\penalty0
  (3):\penalty0 97--106, Sep 1998.
\newblock ISSN 1089-778X.
\newblock \doi{10.1109/4235.735432}.

\bibitem[Shir(2012)]{Shir2012}
O.~M. Shir.
\newblock Niching in evolutionary algorithms.
\newblock In \emph{Handbook of Natural Computing}, pages 1035--1069. Springer
  Berlin Heidelberg, 2012.
\newblock ISBN 978-3-540-92910-9.
\newblock \doi{10.1007/978-3-540-92910-9_32}.

\bibitem[Singh and Deb(2006)]{Singh2006}
G.~Singh and K.~Deb.
\newblock Comparison of multi-modal optimization algorithms based on
  evolutionary algorithms.
\newblock In \emph{Proceedings of the Genetic and Evolutionary Computation
  Conference}, GECCO '06, pages 1305--1312. ACM, 2006.
\newblock ISBN 1-59593-186-4.
\newblock \doi{10.1145/1143997.1144200}.

\bibitem[Squillero and Tonda(2016)]{Squillero2016}
G.~Squillero and A.~Tonda.
\newblock Divergence of character and premature convergence: A survey of
  methodologies for promoting diversity in evolutionary optimization.
\newblock \emph{Information Sciences}, 329:\penalty0 782--799, 2016.
\newblock ISSN 0020-0255.
\newblock \doi{10.1016/j.ins.2015.09.056}.
\newblock Special issue on Discovery Science.

\bibitem[Sudholt(2018)]{Sudholt2018}
D.~Sudholt.
\newblock {The Benefits of Population Diversity in Evolutionary Algorithms: A
  Survey of Rigorous Runtime Analyses}.
\newblock \emph{ArXiv e-prints}, Jan. 2018.

\bibitem[Tsafarakis(2016)]{Tsafarakis2016}
S.~Tsafarakis.
\newblock Redesigning product lines in a period of economic crisis: a hybrid
  simulated annealing algorithm with crossover.
\newblock \emph{Annals of Operations Research}, 247\penalty0 (2):\penalty0
  617--633, 2016.
\newblock ISSN 1572-9338.
\newblock \doi{10.1007/s10479-015-2032-0}.

\end{thebibliography}

\end{document}